%% file: main.tex
\documentclass[accepted]{uai2026}

\usepackage[utf8]{inputenc}
\usepackage[T1]{fontenc}
\usepackage{microtype}
\usepackage[english]{babel}

\usepackage{natbib}
\usepackage{authblk}

\usepackage{algorithm}
\usepackage{algorithmic}

\usepackage{graphicx}
\usepackage{subcaption}
\usepackage{booktabs}
\usepackage{multirow}
\usepackage[usenames,dvipsnames,svgnames,table]{xcolor}

\usepackage{amsmath}
\usepackage{amssymb}
\usepackage{mathtools}
\usepackage{amsthm}
\usepackage{amsfonts}
\usepackage{bm}

\usepackage{enumitem}
\setlist{nosep}

\usepackage[most]{tcolorbox}
\usepackage{tikz}
\usepackage{fontawesome5}

\definecolor{takeawaybg}{HTML}{F2F7F7}
\definecolor{takeawaystroke}{HTML}{468985}
\definecolor{theorbg}{gray}{0.96}
\definecolor{theorframe}{gray}{0.40}

\newtcolorbox{contribbox}{
  enhanced,
  colback=green!5,
  colframe=green!40!black,
  boxrule=0pt,
  arc=0mm,
  left=2mm, right=2mm, top=1.5mm, bottom=1.5mm,
  borderline west={3pt}{0pt}{green!60!black},
}

\usepackage{hyperref}

\definecolor{uaiblue}{RGB}{0, 0, 120}

\hypersetup{
  colorlinks=true,
  linkcolor=uaiblue,
  citecolor=uaiblue,
  urlcolor=uaiblue
}

\usepackage[capitalize,noabbrev]{cleveref}

\theoremstyle{plain}
\newtheorem{theorem}{Theorem}[section]
\newtheorem{proposition}[theorem]{Proposition}

\newtheorem{corollary}[theorem]{Corollary}

\theoremstyle{definition}
\newtheorem{definition}[theorem]{Definition}
\newtheorem{assumption}[theorem]{Assumption}

\theoremstyle{remark}
\newtheorem{remark}[theorem]{Remark}

\tcolorboxenvironment{theorem}{
  enhanced,
  colback=theorbg,
  colframe=theorframe,
  boxrule=0pt,
  arc=0pt,
  left=2mm,
  right=2mm,
  top=1.5mm,
  bottom=1.5mm,
  borderline west={2pt}{0pt}{theorframe},
}

\tcolorboxenvironment{remark}{
  enhanced,
  colback=takeawaybg,
  colframe=takeawaybg,
  boxrule=0pt,
  arc=0pt,
  left=4mm,
  right=2mm,
  top=2mm,
  bottom=2mm,
  borderline west={3pt}{0pt}{takeawaystroke},
  overlay={
    \node[
      anchor=center,
      fill=white,
      draw=takeawaystroke,
      line width=1pt,
      circle,
      inner sep=2pt,
      yshift=-8pt,
      xshift=1.5pt
    ] at (frame.north west) {\color{takeawaystroke}\footnotesize\faLightbulb};
  }
}

\newcommand{\R}{\mathbb{R}}
\newcommand{\E}{\mathbb{E}}

\newcommand{\ip}[2]{\left\langle #1 ,\, #2 \right\rangle}

\DeclareMathOperator{\conv}{conv}

\DeclareMathOperator*{\Argmin}{Arg\,min}
\DeclareMathOperator*{\argmin}{arg\,min}

\DeclareMathOperator*{\argmax}{arg\,max}

\title{Which Directions Matter? Sparse Design for Affine Robust Optimization}

\author[1]{Pedro Chumpitaz-Flores}
\author[1]{My Duong}
\author[1]{Juan S. Borrero}
\author[1]{Kaixun Hua}

\affil[1]{%
  \textit{University of South Florida}\\
  Tampa, FL, USA%
}

\begin{document}

\maketitle

\input{content/0_Content}
\bibliography{custom}

\clearpage
\appendix
\input{content/0_appendix}

\end{document}

%% file: content/0_Content.tex
\begin{abstract}
Robust machine learning and optimization rely on the uncertainty model choice. We investigate which uncertainty directions a model must cover when defined by a finite dictionary and a budget constraint. Selecting a subset forms an atomic uncertainty set with a closed form support function, yielding tractable robust programs for affine objectives. We propose a data-driven selection rule based on a coverage objective over evaluation directions, including gradients, adversarial perturbations, or shifts observed on held out data. We prove this objective is monotone and submodular, supporting a greedy method with a $(1-1/e)$ approximation guarantee and a matching hardness barrier. We also provide a certificate bounding the loss from the selected subset and a radius calibration rule with out-of-sample control.
\end{abstract}

\input{content/0_Intro}
\input{content/1_Method}
\input{content/2_Experiments}

\input{content/3_Conclusion}

%% file: content/0_Intro.tex
\section{Introduction}

Directional uncertainty often involves large families of perturbation directions, yet only a small subset influences the robust optimum. This paper investigates: Given a finite dictionary $\mathcal D=\{d_i\}_{i=1}^N$ and a budget $B$, which directions recover the robustness of the full model?

We address this through an atomic directional uncertainty family $\mathcal{U}_S(r)$ indexed by subsets $S\subseteq[N]$, a submodular support coverage surrogate for budget-constrained selection, and certificates linking directional coverage to the robust value gap. We further provide finite-sample radius calibration for out-of-sample feasibility.

\begin{figure}[h]
    \centering
    \includegraphics[width=1\linewidth]{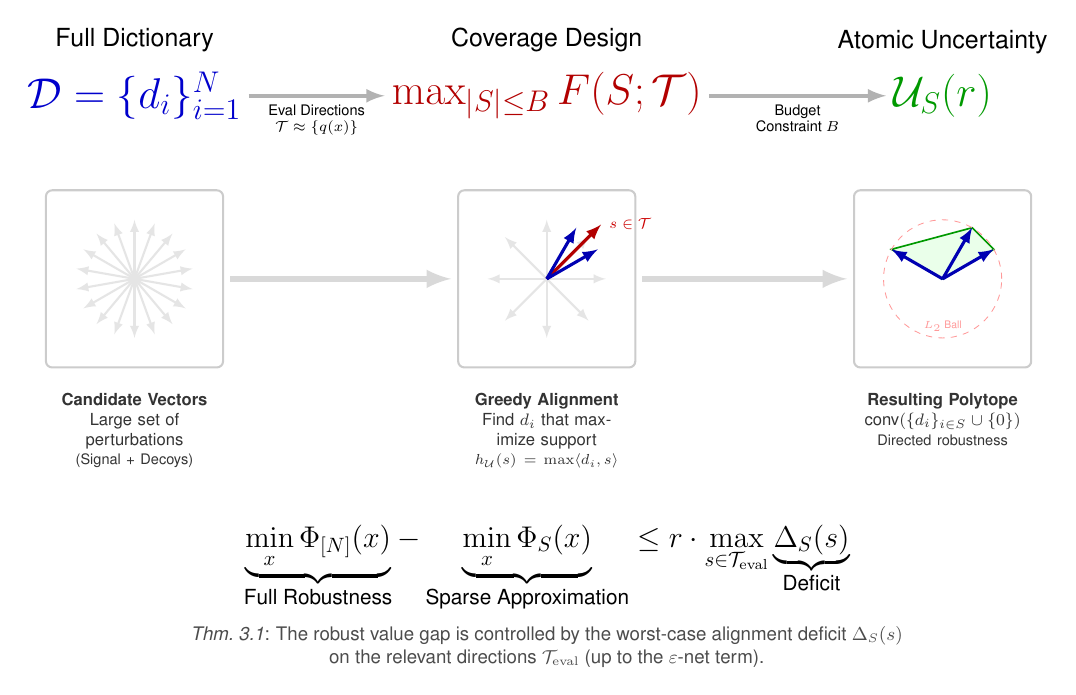}
    \vspace{-0.7cm}
    \caption{\textbf{Sparse directional design via greedy support alignment.} From a dictionary $\mathcal{D}$ of candidate perturbations containing signal and noise (left), the strategy (center) greedily identifies atoms $d_i$ maximizing support alignment with evaluation directions $\mathcal{T}$, such as gradients. This defines the atomic uncertainty set $\mathcal{U}_S(r)$ as a sparse polytope (right). Section~\ref{sec:theoretical-guarantees} establishes selection submodularity and optimality, while the performance certificate (Theorem~\ref{thm:enet-bridge}) ensures minimizing the alignment deficit $\Delta_S$ on $\mathcal{T}$ recovers the robust value.}
    \label{fig:UAI2026_Uncertanty__Fig1_}
\end{figure}

Our approach builds on robust optimization via uncertainty set design and support function reformulations \citep{BenTal1998, ElGhaoui1997, Bertsimas2004, Bertsimas2011}, motivated by certified learning under structured perturbations \citep{Szegedy2014, Goodfellow2015, Madry2018, Wong2018, Raghunathan2018, Cohen2019, Engstrom2019}. Algorithmically, we employ monotone submodular maximization with cardinality constraints \citep{Nemhauser1978, Krause2014} and integrate a calibration layer inspired by conformal prediction coverage guarantees \citep{Vovk2005, Shafer2008, Lei2018, Romano2019}.
The dictionary $\mathcal D$ comprises application-specific candidates, including gradients, adversarial directions at training points, domain shifts, or stress directions used in robust optimization. Our experiments instantiate $\mathcal D$ using synthetic shifts and classifier feature directions to study the micro-budget regime $B \ll N$.

\textbf{Contributions.}
We present a framework for designing budget-constrained directional uncertainty sets.
\begin{itemize}[leftmargin=*, topsep=2pt, itemsep=2pt, parsep=0pt]
\item \textbf{Conceptually:} We frame directional robustness as a subset design problem over a finite dictionary, where a small direction set recovers the full model performance.
\item \textbf{Technically:} We introduce an atomic uncertainty family with a closed-form support function, deriving certificates that relate directional coverage to the robust value gap and providing finite-sample radius calibration for out-of-sample feasibility.
\item \textbf{Empirically:} We demonstrate that greedy selection based on support coverage outperforms random baselines in the micro-budget regime, recovering full model behavior with a minimal fraction of directions.
\end{itemize}

%% file: content/1_Method.tex
\section{Problem Setup and Performance Gap}

\subsection{Atomic Sets and Dictionaries}
\label{sec:formulation}

We consider a finite dictionary $\mathcal{D}=\{d_1,\dots,d_N\}\subset\R^p$ augmented with the null atom $d_0:=\bm{0}$, and denote the index set by $[N]_0:=\{0,1,\dots,N\}$. For any subset $S\subseteq[N]$, let $S_0:=S\cup\{0\}$. We define the atomic set dependent on a radius $r>0$ as
\begin{equation}\label{eq:US}
U_S(r)\ :=\ \Big\{u=\sum_{i\in S}\alpha_i d_i:\ \alpha_i\ge 0,\ \sum_{i\in S}\alpha_i\le r\Big\},
\end{equation}
which implies that the support function takes the closed form
\begin{equation}\label{eq:support}
h_{U_S}(s)\ =\ \sup_{u\in U_S(r)}\ip{u}{s}\ =\ r\,\max_{i\in S_0}\ip{d_i}{s}.
\end{equation}
Although \eqref{eq:US} relies on non-negative coefficients $\alpha_i \ge 0$, symmetric uncertainty sets such as $\ell_1$-balls are accommodated by including antipodal pairs $\{d, -d\}$ in the dictionary $\mathcal{D}$.

\subsection{Affine Robust Optimization}

We address a baseline robust optimization problem defined over a nonempty and compact domain $X\subseteq\R^n$.

\begin{assumption}[Affine structure in $u$ and regularity]\label{asp:lin-u}
There exist continuous functions $b:X\to\R$ and $q:X\to\R^p$ such that the objective satisfies $f(x,u)=b(x)+\ip{u}{q(x)}$ for all pairs $(x,u)$. Accordingly, the robust objective is
\begin{equation}\label{eq:Phi_definition}
\begin{split}
    \Phi_S(x)\ &:=\ \sup_{u\in U_S(r)} f(x,u) \\
    &=\ b(x)+h_{U_S}\big(q(x)\big) \\
    &=\ b(x)+r\max_{i\in S_0}\ip{d_i}{q(x)}.
\end{split}
\end{equation}
Under the compactness of $X$, minimizers of $\Phi_S$ are guaranteed to exist.
\end{assumption}

\begin{remark}[Optional convexity for tractability]
If $X$ is convex, $b$ is convex, and $q$ is affine, then the function $x\mapsto \Phi_S(x)$ is convex. This structure covers standard linear and second order cone programming instances by linearizing the maximum in the objective.
\end{remark}

Given an out-of-sample evaluation metric $\mathrm{Risk}(x)$, such as the hold out violation rate or the conditional value at risk of the regret, the subset design problem seeks to minimize
\begin{equation}\label{eq:diseno}
\min_{\substack{S\subseteq[N]\\ |S|\le B}}\ \mathrm{Risk}\!\big(x_S^\star\big)\qquad
\text{where}\quad x_S^\star\in\Argmin_{x\in X}\ \Phi_S(x).
\end{equation}

\subsection{Performance Gap and Certificates}

To quantify the loss of robustness, we define the full model corresponding to $S=[N]$ with objective $\Phi_{[N]}(x)=b(x)+r\max_{i\in [N]_0}\ip{d_i}{q(x)}$. The following result characterizes the difference between the full and restricted objectives.


\begin{proposition}[Pointwise value gap]\label{prop:gap-ptwise}
For any $x\in X$ and $S\subseteq[N]$, the value gap is given by
\begin{equation}\label{eq:gap-ptwise}
\begin{split}
\Phi_{[N]}(x) &- \Phi_S(x) \\
&= r\Big(\max_{i\in [N]_0}\ip{d_i}{q(x)}-\max_{i\in S_0}\ip{d_i}{q(x)}\Big) \\
&\ge 0.
\end{split}
\end{equation}
In particular, if $x_S^\star$ is a minimizer of $\Phi_S$, then
\begin{equation}\label{eq:gap-optvals}
\begin{split}
\min_{x}\Phi_{[N]}(x) &- \min_{x}\Phi_S(x) \\
&\le \Phi_{[N]}(x_S^\star)-\Phi_S(x_S^\star) \\
&= r\,\Delta_S\big(q(x_S^\star)\big),
\end{split}
\end{equation}
where the deficit is defined as $\Delta_S(s):=\max_{i\in [N]_0}\ip{d_i}{s}-\max_{i\in S_0}\ip{d_i}{s}$.
\end{proposition}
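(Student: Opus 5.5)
The plan is to argue directly from the closed-form robust objective in \eqref{eq:Phi_definition}, which Assumption~\ref{asp:lin-u} provides for every subset, including $S=[N]$. For fixed $x\in X$, I will write $\Phi_{[N]}(x)=b(x)+r\max_{i\in[N]_0}\ip{d_i}{q(x)}$ and $\Phi_S(x)=b(x)+r\max_{i\in S_0}\ip{d_i}{q(x)}$. Subtracting, the common term $b(x)$ cancels, and this yields the equality in \eqref{eq:gap-ptwise}.

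For nonnegativity, I will use only the inclusion $S_0\subseteq[N]_0$. A maximum of the finite family $\{\ip{d_i}{q(x)}\}$ over a larger index set is at least the maximum over a subset. Since $r>0$, the gap is $\ge 0$. Both index sets contain $0$, so both maxima are over nonempty finite sets and are well defined.

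For \eqref{eq:gap-optvals}, I will first note that minimizers of $\Phi_S$ and $\Phi_{[N]}$ exist. Each is continuous on the compact set $X$, being built from continuous $b$, $q$ and a finite maximum of continuous functions, so both $\min_x$ are attained. Take $x_S^\star\in\Argmin_X\Phi_S$. Then $\min_x\Phi_S(x)=\Phi_S(x_S^\star)$, and trivially $\min_x\Phi_{[N]}(x)\le\Phi_{[N]}(x_S^\star)$. Subtracting gives the inequality. The final equality is \eqref{eq:gap-ptwise} evaluated at $x=x_S^\star$, rewritten with the definition of $\Delta_S$ at $s=q(x_S^\star)$.

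There is no real obstacle here. The only point needing care is attainment of the minima, which comes from compactness and continuity as stated in Assumption~\ref{asp:lin-u}. I would also remark that the left side of \eqref{eq:gap-optvals} is itself nonnegative, since $\Phi_{[N]}\ge\Phi_S$ pointwise, although the statement does not require this.
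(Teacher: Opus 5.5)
Your proposal is correct and follows essentially the same route as the paper. Both subtract the closed forms of $\Phi_{[N]}$ and $\Phi_S$ so that $b(x)$ cancels, compare maxima over the nested index sets $S_0\subseteq[N]_0$, and evaluate the identity at a minimizer $x_S^\star$ of $\Phi_S$. The paper's written proof actually stops at the identity $\Phi_{[N]}(x)-\Phi_S(x)=r\,\Delta_S(q(x))$, so your explicit treatment of nonnegativity and of the inequality on optimal values fills in steps the paper leaves implicit.
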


\begin{corollary}[Sufficient condition for equality of values]\label{cor:igualdad-valores}
If there exists a solution $x_S^\star\in\Argmin\Phi_S$ such that the deficit satisfies $\Delta_S\big(q(x_S^\star)\big)=0$, then the optimal values coincide:
\[
\min_x \Phi_{[N]}(x)\ =\ \min_x \Phi_S(x).
\]
We note that this does not imply coincidence of the minimizers except under additional properties such as uniqueness or strict convexity.
\end{corollary}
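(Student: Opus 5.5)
The corollary follows from Proposition~\ref{prop:gap-ptwise} by squeezing the difference of optimal values between zero and the deficit at $x_S^\star$. Both minima exist by the compactness of $X$ and the continuity of $\Phi_S$ and $\Phi_{[N]}$, as noted after Assumption~\ref{asp:lin-u}.

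\textbf{Upper bound.} Inequality~\eqref{eq:gap-optvals} states
\[
\min_x\Phi_{[N]}(x)-\min_x\Phi_S(x)\le r\,\Delta_S\big(q(x_S^\star)\big).
\]
The hypothesis makes the right-hand side zero, so $\min_x\Phi_{[N]}\le\min_x\Phi_S$.

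\textbf{Lower bound.} The pointwise inequality~\eqref{eq:gap-ptwise} gives $\Phi_{[N]}(x)\ge\Phi_S(x)$ for every $x\in X$. This holds because $S_0\subseteq[N]_0$, so a maximum over the larger index set is at least the maximum over the smaller one, and $r>0$. Let $x_{[N]}^\star$ be a minimizer of $\Phi_{[N]}$. Then
\[
\min_x\Phi_{[N]}=\Phi_{[N]}(x_{[N]}^\star)\ge\Phi_S(x_{[N]}^\star)\ge\min_x\Phi_S.
\]

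Combining the two bounds gives equality of the optimal values.

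There is no real obstacle. The only care needed is to apply the pointwise dominance at the full-model minimizer for the lower bound, rather than at $x_S^\star$. For the closing remark, I would not attempt a proof. A simple example suffices, such as $q\equiv 0$ on $X$, which makes both objectives equal to $b$. Then any $b$ with several minimizers shows that equal optimal values do not force the minimizer sets to coincide in any useful sense without uniqueness.
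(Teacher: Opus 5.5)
Your proof of the value equality is correct and is essentially the paper's argument. The paper writes the same squeeze as one chain, $\min_x\Phi_{[N]}\ge\min_x\Phi_S=\Phi_S(x_S^\star)=\Phi_{[N]}(x_S^\star)\ge\min_x\Phi_{[N]}$: pointwise dominance gives the first inequality, and the zero deficit gives the middle equality.

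Your illustration of the closing remark does not work, however. With $q\equiv 0$ you get $\Phi_S=\Phi_{[N]}=b$ identically, so the two argmin sets are literally equal. The example therefore shows nothing about non-coincidence of minimizers.

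It helps to see where non-coincidence can come from. Under the hypothesis you always have $x_S^\star\in\Argmin\Phi_{[N]}$. You also have $\Argmin\Phi_{[N]}\subseteq\Argmin\Phi_S$: if $x$ minimizes $\Phi_{[N]}$, then $\Phi_S(x)\le\Phi_{[N]}(x)=\min\Phi_{[N]}=\min\Phi_S$. So the sets can differ only when $\Argmin\Phi_S$ is strictly larger, which is why uniqueness of the $\Phi_S$-minimizer restores coincidence.

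A working example: take $X=[-1,1]$, $p=1$, $b\equiv 0$, $q(x)=x$, $d_1=1$, $d_2=-1$, and $S=\{1\}$.
\begin{itemize}
\item $\Phi_S(x)=r\max\{0,x\}$ has argmin $[-1,0]$.
\item $\Phi_{[N]}(x)=r|x|$ has argmin $\{0\}$.
\item Both optimal values are $0$.
\item The hypothesis holds at $x_S^\star=0$, though not at $x_S^\star=-1$, where $\Delta_S(q(-1))=1$.
\end{itemize}
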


\begin{remark}[Critical directions]
We refer to the vectors $s_t:=q(x_t)$ as revealed directions, where $x_t$ is a solution of $\min_x \Phi_{S_t}(x)$ obtained during the algorithm described in Section \ref{sec:alg}. Monitoring the value $\Delta_{S_t}(s_t)$ allows us to certify the condition of Corollary \ref{cor:igualdad-valores} at run time.
\end{remark}

\section{Theoretical and Statistical Guarantees}
\label{sec:theoretical-guarantees}

\subsection{Deterministic Error Bounds}
\label{sec:enet-bridge}

\paragraph{Metric definitions.}
To establish a deterministic global bound on the value gap, we first recall the definition of the alignment deficit $\Delta_S(s):=\max_{i\in [N]_0}\langle d_i,s\rangle-\max_{i\in S_0}\langle d_i,s\rangle\ge 0$, where $S_0=S\cup\{0\}$ and $[N]_0=\{0,1,\dots,N\}$ with $d_0=\bm 0$. We introduce the pseudometric induced by the dictionary, defined as
\begin{equation}\label{eq:dD}
d_{\mathcal D}(s,s')\ :=\ \max_{i\in [N]_0}\,|\langle d_i,\, s-s'\rangle|.
\end{equation}
Based on this metric, a finite set $\mathcal T\subset\R^p$ constitutes an $\varepsilon$-net of $q(X)$ if, for every $x\in X$, there exists an element $s\in\mathcal T$ such that $d_{\mathcal D}\big(q(x),s\big)\le \varepsilon$. This geometric structure bridges the gap between discrete directional coverage and global robustness.

\begin{theorem}[$\varepsilon$-net bridge theorem]\label{thm:enet-bridge}
Under Assumption~\ref{asp:lin-u} and compactness of $X$, for any subset $S\subseteq[N]$ and any $\varepsilon$-net $\mathcal T$ of $q(X)$ in $d_{\mathcal D}$, the value gap satisfies
\[
\min_{x}\Phi_{[N]}(x)\ -\ \min_{x}\Phi_{S}(x)
\ \le\ r\Big(\max_{s\in\mathcal T}\Delta_S(s)\ +\ 2\varepsilon\Big).
\]
\end{theorem}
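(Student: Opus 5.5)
The plan is to reduce the global gap to a pointwise deficit at a single point, using Proposition~\ref{prop:gap-ptwise}, and then transfer that deficit to a net element at a cost of $2\varepsilon$. Compactness of $X$ and continuity of $b,q$ (Assumption~\ref{asp:lin-u}) guarantee that a minimizer $x_S^\star\in\Argmin_x\Phi_S(x)$ exists. By \eqref{eq:gap-optvals},
\[
\min_x\Phi_{[N]}(x)-\min_x\Phi_S(x)\ \le\ r\,\Delta_S\big(q(x_S^\star)\big).
\]
It therefore suffices to show that $\Delta_S(q(x))\le \max_{s\in\mathcal T}\Delta_S(s)+2\varepsilon$ for every $x\in X$, and in particular at $x=x_S^\star$.

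The key step is a Lipschitz estimate: $\Delta_S$ is $2$-Lipschitz with respect to $d_{\mathcal D}$. Write $M_A(s):=\max_{i\in A}\langle d_i,s\rangle$ for an index set $A\subseteq[N]_0$. For any $i\in A$,
\[
\langle d_i,s\rangle=\langle d_i,s'\rangle+\langle d_i,s-s'\rangle\le M_A(s')+d_{\mathcal D}(s,s').
\]
This holds because $A\subseteq[N]_0$, so the index $i$ appears in the maximum defining $d_{\mathcal D}$. Taking the maximum over $i\in A$ and symmetrizing gives $|M_A(s)-M_A(s')|\le d_{\mathcal D}(s,s')$. We apply this with $A=[N]_0$ and with $A=S_0$. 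Since $\Delta_S=M_{[N]_0}-M_{S_0}$, the triangle inequality yields
\[
|\Delta_S(s)-\Delta_S(s')|\le 2\,d_{\mathcal D}(s,s').
\]

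To conclude, we take $x=x_S^\star$ and pick $s\in\mathcal T$ with $d_{\mathcal D}(q(x_S^\star),s)\le\varepsilon$, which exists by the $\varepsilon$-net property. Then
\[
\Delta_S\big(q(x_S^\star)\big)\le\Delta_S(s)+2\varepsilon\le\max_{s'\in\mathcal T}\Delta_S(s')+2\varepsilon.
\]
Multiplying by $r>0$ and combining with the first display gives the claim. The maximum over $\mathcal T$ is attained because $\mathcal T$ is finite.

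The only mildly delicate point is the Lipschitz step. It needs the pseudometric to range over all of $[N]_0$, which is a superset of both index sets in $\Delta_S$. It also needs the null atom $d_0=\bm 0$ to be included consistently in both maxima; the null atom contributes $0$ to $d_{\mathcal D}$, so it causes no issue. Beyond that, the argument is routine.
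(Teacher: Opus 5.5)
Your proof is correct and follows essentially the same route as the paper's proof: you show $M_A$ is $1$-Lipschitz in $d_{\mathcal D}$ for $A\in\{[N]_0,S_0\}$, move the deficit from $q(x)$ to a nearby net point at cost $2\varepsilon$, and apply Proposition~\ref{prop:gap-ptwise} at a minimizer $x_S^\star$. The only cosmetic differences are that you specialize to $x_S^\star$ before using the net and phrase the transfer as $2$-Lipschitzness of $\Delta_S$, whereas the paper bounds $\Delta_S(q(x))$ for every $x\in X$ first.
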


\begin{proof}[\textit{Sketch (Full proof in App.~\ref{app:theom3_1})}]
For each $x$, the support function $h_{U}(q(x))$ is $r$-Lipschitz in $d_{\mathcal D}$, meaning $|h_{U}(s)-h_{U}(s')|\le r\, d_{\mathcal D}(s,s')$. Taking $s\in\mathcal T$ with $d_{\mathcal D}(q(x),s)\le\varepsilon$ and applying the inequality to both the full support and the restricted support yields $\Phi_{[N]}(x)-\Phi_S(x)\le r\big(\Delta_S(s)+2\varepsilon\big)$. Minimizing over $x$ and maximizing over $s\in\mathcal T$ concludes the proof.
\end{proof}

\begin{remark}[Constructing $\mathcal T$ via Lipschitzness of $q$]\label{rmk:enet-const}
Let $\|\cdot\|$ be a norm on $\R^p$ with dual $\|\cdot\|_\ast$ and suppose $q$ is $L_q$-Lipschitz
on $(X,\|\cdot\|)$, such that $\|q(x)-q(x')\|\le L_q\|x-x'\|$.
Since $d_{\mathcal D}(s,s')\le \|D\|_\ast\,\|s-s'\|$ with $\|D\|_\ast:=\max_{i\in[N]_0}\|d_i\|_\ast$,
a grid in $X$ with step
\[
\Delta x\ \le\ \frac{\varepsilon}{\|D\|_\ast\,L_q}
\]
ensures that the image $\mathcal T=\{q(x): x\ \text{on the grid}\}$ is an $\varepsilon$-net in $d_{\mathcal D}$.
In the linear case $q(x)=M^\top x$, we have $L_q=\|M^\top\|$ (operator norm induced by $\|\cdot\|$).
An \emph{adaptive} version starts from the revealed directions $s_t=q(x_t)$ and refines partitions
until the $d_{\mathcal D}$-diameter is at most $\varepsilon$.
\end{remark}

In operational contexts, we maintain a set of evaluation directions $\mathcal T$ during the greedy procedure to compute the certificate $\widehat{\mathrm{gap}}(S;\mathcal T,\varepsilon) := r(\max_{s\in\mathcal T}\Delta_S(s) + 2\varepsilon)$. By Theorem~\ref{thm:enet-bridge}, this quantity deterministically upper-bounds the global value gap. While Theorem~\ref{thm:greedy} ensures that the greedy strategy improves the average coverage $F(S;\mathcal T)$, this certificate controls the worst-case deficit directly. In practice, we evaluate this certificate on sets built from revealed directions $\{q(x_t)\}$ and augmented with hold-out collections, reporting both the worst-case proxy and the average coverage.

\subsection{Finite Sample Feasibility}
\label{sec:oos-guarantees}

We complement the deterministic certificates with finite-sample out-of-sample guarantees assuming independent and identically distributed samples $\{u_j\}_{j=1}^m$ from the operational environment. Throughout this analysis, we use the atomic set definition $U_S(r) := \{u=\sum_{i\in S}\alpha_i d_i: \alpha_i\ge 0, \sum_{i\in S}\alpha_i\le r\}$.

\paragraph{Directional statistic.}
For each subset $S\subseteq[N]$, define the directional score as
\[
Z_S(u) := \max_{i\in S_0}\,\langle d_i,\,u\rangle .
\]
Calibrating the radius $r$ as a high quantile of $Z_S$ allows us to control the coverage level used by the pessimization oracle. Let $\widehat Q_{\beta}(Z_S)$ denote the empirical $\beta$-quantile of the scores $\{Z_S(u_j)\}_{j=1}^m$. For a confidence level $1-\delta$, define
\[
\eta := \sqrt{\frac{1}{2m}\,\log\!\Big(\frac{2\,\sum_{k=0}^{B}\binom{N}{k}}{\delta}\Big)}.
\]
Assume $\eta\le \alpha$. We then define the calibrated radius
\begin{equation}
\label{eq:rhat-dkw-union}
\widehat r(S)\ :=\ \widehat Q_{\,1-\alpha+\eta}(Z_S).
\end{equation}


\begin{theorem}[Finite-sample directional calibration]
\label{thm:oos-feas}
Assume $\eta \le \alpha$. With probability at least $1-\delta$ over the sampling, the calibration~\eqref{eq:rhat-dkw-union} ensures that, simultaneously for every subset $S \subseteq [N]$ with $|S| \le B$,
\[
\mathbb{P}\bigl(Z_S(U) > \hat r(S)\bigr) \le \alpha .
\]
\end{theorem}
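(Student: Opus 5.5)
The approach is to apply the Dvoretzky--Kiefer--Wolfowitz (DKW) inequality with Massart's tight constant to each fixed subset, and then take a union bound over all admissible subsets. Fix $S\subseteq[N]$ with $|S|\le B$. Let $F_S$ be the CDF of $Z_S(U)$ and $\widehat F_S$ the empirical CDF of the i.i.d.\ scores $\{Z_S(u_j)\}_{j=1}^m$. DKW gives
\[
\mathbb P\Bigl(\sup_t|\widehat F_S(t)-F_S(t)|>\eta\Bigr)\le 2e^{-2m\eta^2}.
\]
The number of admissible subsets is $\sum_{k=0}^B\binom{N}{k}$. With the stated choice of $\eta$, we have $2e^{-2m\eta^2}=\delta/\sum_{k=0}^B\binom{N}{k}$. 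A union bound therefore shows that, with probability at least $1-\delta$, the event $\mathcal E$ holds: $\sup_t|\widehat F_S(t)-F_S(t)|\le\eta$ simultaneously for all such $S$. Two features of the setup matter here. The subsets are indexed deterministically, so the union bound is legitimate even though the same samples are reused across subsets. And DKW needs no continuity of $F_S$, so atoms (e.g.\ $Z_S\ge 0$ because of the null atom $d_0$) cause no trouble.

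On $\mathcal E$, the plan is a quantile-inversion step. Write $\beta:=1-\alpha+\eta\le 1$, which uses the assumption $\eta\le\alpha$ and ensures the empirical quantile is well defined. Take the left-continuous definition $\widehat Q_\beta=\inf\{t:\widehat F_S(t)\ge\beta\}$. Right-continuity of $\widehat F_S$ then gives $\widehat F_S(\hat r(S))\ge\beta$. Hence
\[
F_S(\hat r(S))\ \ge\ \widehat F_S(\hat r(S))-\eta\ \ge\ 1-\alpha ,
\]
so $\mathbb P(Z_S(U)>\hat r(S))=1-F_S(\hat r(S))\le\alpha$. Here $U$ is a fresh draw independent of the sample, and the probability is conditional on the sample. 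Since $\mathcal E$ covers every $S$ with $|S|\le B$ at once, the conclusion holds simultaneously.

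The only delicate points are bookkeeping, and I expect the main obstacle to be the quantile convention rather than any real difficulty. If the paper's empirical quantile is the order statistic $Z_{(\lceil m\beta\rceil)}$, the inequality $\widehat F_S(Z_{(\lceil m\beta\rceil)})\ge\lceil m\beta\rceil/m\ge\beta$ still holds, including with ties, so the argument is unchanged. I would state this explicitly. I would also note that the edge case $\beta=1$ gives the sample maximum, which is consistent. Independence of $U$ from the calibration sample is implicit in the phrase ``out-of-sample'', and I would make it an explicit standing assumption.
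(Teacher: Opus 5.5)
Your proposal is correct and follows the paper's proof essentially step for step. Both arguments apply DKW with Massart's constant to each fixed $S$, then take a union bound over the $\sum_{k=0}^{B}\binom{N}{k}$ admissible subsets with $\eta$ chosen so that $2e^{-2m\eta^2}=\delta/\sum_{k=0}^{B}\binom{N}{k}$, and finally use the inversion $F_S(\widehat r(S))\ge \widehat F_S(\widehat r(S))-\eta\ge 1-\alpha$ on the good event. Your explicit treatment of the quantile convention, ties, atoms of $Z_S$, and the independence of the test draw $U$ from the calibration sample spells out points that the paper leaves implicit.
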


The full proof is provided in Appendix~\ref{app:oos-proof}.

\subsection{Radius Calibration Procedure}
\label{subsec:integration}

The integration of this statistical layer into the greedy algorithm follows a sequential process. First, upon completion of the greedy selection which yields $S$, we compute the scores $Z(u_j)=\max_{i\in S_0}\langle d_i,u_j\rangle$ for all samples $j=1,\dots,m$. Second, we calibrate the radius $\widehat r$ according to \eqref{eq:rhat-dkw-union}. If the subset size is fixed to exactly $B$, the combinatorial term $\sum_{k=0}^B\binom{N}{k}$ may be replaced by $\binom{N}{B}$. Finally, we solve the robust optimization problem $\min_{x\in X}\Phi_S(x)$ using the uncertainty set $U_S(\widehat r)$ to obtain the solution $x_S^\star$. This procedure allows us to report both the finite-sample guarantee in Theorem~\ref{thm:oos-feas} and the value gap performance certificate.

\paragraph{Conformal alternative.}
The statistical correction term scales logarithmically as $O(\sqrt{\log \sum \binom{N}{k}/m})$, which becomes modest when the sample size $m$ is in the low thousands. For moderate sample sizes, a split-conformal calibration approach offers an alternative. By splitting the data into training (for selection) and an independent calibration set, one can define $\widehat r$ as the split-conformal quantile of $Z$, yielding exact marginal coverage for the event $Z_S(U)\le \widehat r$ without the logarithmic term, although this sacrifices uniformity over all data-dependent subsets.

\subsection{Validation Protocol}
\label{sec:validacion}

The validation protocol requires partitioning the available data into training and hold-out sets. The training set is used to determine the subset $S$ and, if applicable, to pre-calibrate the radius. The hold-out set is reserved to estimate the risk metric $\mathrm{Risk}(x_S^\star)$ for reporting purposes. Beyond the theoretical guarantees provided in Section~\ref{sec:oos-guarantees}, this protocol generates empirical summaries such as means, conditional value at risk, and violation rates. In our experiments, whenever calibration data is available, we apply the post-selection radius calibration step described above before reporting the final performance metrics.

\section{Greedy Selection Algorithm}

\subsection{Support Coverage Objective}

We begin by defining a finite set of evaluation directions $\mathcal{T}\subset\R^p$, which may consist of revealed directions $\{s_t\}$ or a validation set. The support coverage functional is defined as the average maximum alignment over this set:
\begin{equation}\label{eq:F}
F(S;\mathcal{T})\ :=\ \frac{1}{|\mathcal{T}|}\sum_{s\in \mathcal{T}}\max_{i\in S_0}\ip{d_i}{s}.
\end{equation}

\begin{proposition}[Submodularity and monotonicity of $F$]\label{prop:submod}
For any fixed direction $s\in\mathcal T$, the function $g_s(S):=\max_{i\in S_0}\ip{d_i}{s}$ is monotone and submodular with respect to the set $S$. Since $F$ is a non-negative linear combination of such functions, $F(\cdot;\mathcal{T})$ retains both monotonicity and submodularity.
\end{proposition}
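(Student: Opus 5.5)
The plan is to verify both properties for a single direction $s$ by writing out marginal gains explicitly, and then to lift them to $F$ by linearity. Fix $s\in\mathcal T$ and write $a_i:=\ip{d_i}{s}$ for $i\in[N]_0$, so that $a_0=0$. Then $g_s(S)=\max_{i\in S_0}a_i$, which is the maximum of finitely many reals over a nonempty index set, since $0\in S_0$ always. Hence $g_s$ is well defined, including at $S=\emptyset$, where $g_s(\emptyset)=0$. Because $d_0=\bm 0$, we also get $g_s(S)\ge 0$ for every $S$, which will matter later for the greedy guarantee, though it is not needed here.

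\textbf{Monotonicity.} If $S\subseteq T$, then $S_0\subseteq T_0$. A maximum over a larger finite index set is at least as large, so $g_s(S)\le g_s(T)$.

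\textbf{Submodularity.} I will use the diminishing-returns characterization. For $S\subseteq T\subseteq[N]$ and $j\notin T$, the marginal gain has the closed form
\[
g_s(S\cup\{j\})-g_s(S)=\max\{a_j,g_s(S)\}-g_s(S)=\big(a_j-g_s(S)\big)_+ ,
\]
and the analogous identity holds with $T$ in place of $S$. The function $c\mapsto (a_j-c)_+$ is nonincreasing in $c$, and monotonicity gives $g_s(S)\le g_s(T)$. Therefore
\[
g_s(S\cup\{j\})-g_s(S)\ \ge\ g_s(T\cup\{j\})-g_s(T),
\]
which is exactly submodularity. This identity for the marginal gain is the only step with any content; everything else is bookkeeping.

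\textbf{Lifting to $F$.} By \eqref{eq:F}, $F(\cdot;\mathcal T)=\sum_{s\in\mathcal T}\frac{1}{|\mathcal T|}g_s$, a combination with nonnegative weights. Both the monotonicity inequality and the diminishing-returns inequality are linear in the function and are preserved under nonnegative scaling. Summing the inequalities for the individual $g_s$ therefore yields them for $F$. I would also note that $F(\emptyset;\mathcal T)=0$ and $F\ge 0$, so $F$ is normalized, nonnegative, monotone and submodular, which are exactly the hypotheses needed for the Nemhauser-type $(1-1/e)$ guarantee invoked later.

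I do not expect a real obstacle. The one point to handle carefully is the role of the null atom $d_0$: it guarantees that the maximum is taken over a nonempty set when $S=\emptyset$ and that $g_s(\emptyset)=0$, and the proof should state this explicitly rather than leave it implicit.
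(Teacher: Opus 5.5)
Your proposal is correct and follows the paper's proof essentially step for step. Monotonicity comes from $S_0\subseteq T_0$, submodularity from the marginal-gain identity $g_s(S\cup\{j\})-g_s(S)=\max\{0,a_j-g_s(S)\}$ combined with $g_s(S)\le g_s(T)$, and the properties pass to $F$ because it is a nonnegative linear combination of the $g_s$. Your remark that $d_0=0$ gives $g_s(\emptyset)=0$ is the same normalization the paper relies on in the proof of Theorem~\ref{thm:greedy}.
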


This functional induces the surrogate design problem, denoted as $(\mathsf{P}_{\mathrm{cov}})$, which captures the directional coverage properties of a subset $S$ subject to a budget constraint:
\begin{equation}\label{eq:cov-design}
\tag{$\mathsf{P}_{\mathrm{cov}}$}
\max_{S\subseteq[N]}\ \Big\{ F(S;\mathcal T)\ :\ |S|\le B \Big\}.
\end{equation}
This problem abstracts the combinatorial core of our framework, where a larger value of $F(S;\mathcal T)$ corresponds to superior average support coverage on the directions relevant to the robust optimization task.

\subsection{Oracle Assisted Greedy Method}
\label{sec:alg}

To address the design problem efficiently, we propose an iterative procedure that alternates between optimizing the robust objective and expanding the set of evaluation directions. At each iteration, the algorithm identifies a worst case perturbation for the current solution using an oracle on the full model. This revealed direction is added to the evaluation set, guiding the subsequent greedy selection of the dictionary atom that maximizes the marginal gain in support coverage.

\begin{algorithm}[h]
\caption{Oracle assisted greedy for designing $U_S(r)$}
\label{alg:greedy}
\begin{algorithmic}[1]
   \STATE {\bfseries Input:} Dictionary $\mathcal{D}=\{d_i\}_{i=1}^N$, budget $B$, radius $r$ \emph{(later OOS calibrated to $\widehat r$ in Section~\ref{sec:oos-guarantees})}, optional validation set $\mathcal{V}$.
   \STATE $S\gets \emptyset$, \ $\mathcal{T}\gets \emptyset$.
   \FOR{$t=1,\dots,B$}
      \STATE Solve $x_t \in \Argmin_{x\in X}\ \Phi_S(x)$ \hfill (convex if applicable).
      \STATE \textbf{Oracle on the full dictionary:} obtain $u_t\in U_{[N]}(r)$ that approximately maximizes $f(x_t,u)$ \hfill (optional witness)
      \STATE With $f(x,u)=b(x)+\ip{u}{q(x)}$, set $s_t\leftarrow q(x_t)$ (used for greedy scoring), $A_t=\argmax_{i\in [N]_0}\ip{d_i}{s_t}$, and update $\mathcal{T}\gets \mathcal{T}\cup\{s_t\}$.
      \STATE Choose $i_t\in[N]\setminus S$ that maximizes the marginal gain of $F(S\cup\{i\};\mathcal{T})$.
      \STATE $S\gets S\cup\{i_t\}$.
      \STATE (Certifiable stopping) If $\Delta_S(s_t)=0$ and the best marginal gain in $F$ is zero, \textbf{stop}.
      \STATE (Optional) Early validation: stop if $\mathrm{Risk}(x_t)$ on $\mathcal{V}$ does not improve by more than $\varepsilon$.
   \ENDFOR
   \STATE \textbf{OOS calibration:} compute $\widehat r$ as in Section~\ref{sec:oos-guarantees} and re-solve $\min_{x\in X}\Phi_S(x)$ with $U_S(\widehat r)$ to obtain $x_S^\star$.
   \STATE {\bfseries Output:} $S$, $U_S(\widehat r)$, and $x_S^\star$.
\end{algorithmic}
\end{algorithm}

\begin{remark}[Worst case greedy baseline]\label{rmk:greedy-maxgap}
As a baseline that targets worst case coverage on the current direction set $\mathcal T$, one can replace the selection rule in Algorithm~\ref{alg:greedy} by
\[
i_t\ \in\ \argmin_{i\in[N]\setminus S}\ \max_{s\in\mathcal T}\Delta_{S\cup\{i\}}(s).
\]
This rule directly optimizes the certificate term $\max_{s\in\mathcal T}\Delta_S(s)$, and we use it as a comparison point in experiments.
\end{remark}

\subsection{Optimality Guarantees}

This section addresses the computational complexity of the design problem and the guarantees of the greedy approach. It first establishes computational hardness.

\begin{theorem}[NP hardness of directional coverage design]\label{thm:np-hard}
Given a finite dictionary $\mathcal D=\{d_i\}_{i=1}^N\subset\R^p$, a finite set $\mathcal T\subset\R^p$, and a budget $B$, problem~\eqref{eq:cov-design} is NP hard. This holds even when the coordinates of $\mathcal T$ and $\mathcal D$ are restricted to $\{0,1\}$.
\end{theorem}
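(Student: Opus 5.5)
We reduce from Maximum Coverage, which is NP-hard in its decision form: given a ground set $E=\{e_1,\dots,e_p\}$, a family $A_1,\dots,A_N\subseteq E$, a budget $B$ and an integer $k$, decide whether some $B$ of the sets cover at least $k$ elements. The encoding is the natural one, with elements as directions and sets as atoms. Set $d_i:=\mathbf 1_{A_i}\in\{0,1\}^p$ for $i\in[N]$ (the null atom $d_0=\bm 0$ is always present) and let $\mathcal T:=\{e_1,\dots,e_p\}$ be the standard basis vectors of $\R^p$. All coordinates then lie in $\{0,1\}$, and the instance has size polynomial in $N$ and $p$.

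The key identity is computed per evaluation direction. For $s=e_j$ we have $\ip{d_i}{e_j}=\mathbf 1[e_j\in A_i]$ and $\ip{d_0}{e_j}=0$, so
\[
g_{e_j}(S)=\max_{i\in S_0}\ip{d_i}{e_j}=\mathbf 1\Big[e_j\in \textstyle\bigcup_{i\in S}A_i\Big].
\]
Averaging over $\mathcal T$ gives
\[
F(S;\mathcal T)=\frac1p\Big|\bigcup_{i\in S}A_i\Big|.
\]
This covers every $S$, including $S=\emptyset$, where both sides are $0$ because of $d_0$. Hence $\max_{|S|\le B}F(S;\mathcal T)\ge k/p$ if and only if the coverage instance is a yes-instance. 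Any algorithm solving \eqref{eq:cov-design}, even only deciding the threshold version, therefore decides Maximum Coverage.

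One detail needs care: ensuring that $\mathcal T$ is a set, that is, that the basis vectors are distinct. This holds automatically. The only other subtlety is the role of $d_0$, and the computation above shows it contributes $0$, so it never distorts coverage. Nonnegativity of the encoding makes the max behave exactly as an OR.

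If a stronger statement is wanted, the same reduction is approximation-preserving because $F$ equals coverage scaled by $1/p$. Feige's $(1-1/e+\epsilon)$-inapproximability for Maximum Coverage then transfers directly, which gives the matching hardness barrier mentioned in the abstract. The main "obstacle" is only to state the decision version of \eqref{eq:cov-design} precisely. Membership in NP is not needed for NP-hardness.
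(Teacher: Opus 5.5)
Your proposal is correct and follows the paper's proof essentially verbatim: the same reduction from Max Coverage with indicator vectors $d_i=\mathbf 1_{A_i}$ as atoms and the canonical basis as $\mathcal T$, giving $F(S;\mathcal T)=\frac1p\bigl|\bigcup_{i\in S}A_i\bigr|$. Your explicit threshold (decision) formulation and your check of the case $S=\emptyset$ via $d_0$ slightly tighten the paper's argument, and your closing remark on approximation preservation matches the paper's proof of Theorem~\ref{thm:hardness-approx}.
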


\begin{proof}[\textit{Sketch (Full proof in App.~\ref{app:theom4_3})}]
The proof reduces from the classical \emph{Max Coverage} problem. Let $U=\{1,\dots,m\}$ be the ground set and $\{A_i\}_{i=1}^N$ be subsets of $U$. An instance of \eqref{eq:cov-design} is constructed by setting $p=m$ and identifying $\R^p$ with $\R^m$. Each $d_i$ is the indicator vector of $A_i$, and $\mathcal T$ is the set of canonical basis vectors. In this construction, maximizing $F(S;\mathcal T)$ is equivalent to maximizing the cardinality of the union of the selected subsets, which is exactly Max Coverage.
\end{proof}

The reduction is gap preserving, which yields a hardness of approximation result.

\begin{theorem}[Hardness of approximation]\label{thm:hardness-approx}
For every $\varepsilon>0$, unless $P=NP$, there is no polynomial time algorithm that, given an instance of \eqref{eq:cov-design}, produces a subset $S$ with $|S|\le B$ satisfying
\[
F(S;\mathcal T)\ \ge\ (1-1/e+\varepsilon)\,F(S^\star;\mathcal T),
\]
where $S^\star$ is an optimal solution.
\end{theorem}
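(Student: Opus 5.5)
We will prove Theorem~\ref{thm:hardness-approx} by showing that the reduction in the proof of Theorem~\ref{thm:np-hard} preserves approximation ratios exactly. We then invoke Feige's theorem: for every $\varepsilon>0$, unless $P=NP$, no polynomial-time algorithm approximates Max Coverage (select at most $B$ sets maximizing the size of their union) within a factor $1-1/e+\varepsilon$.

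\textbf{Step 1: the reduction preserves values exactly.} Take a Max Coverage instance with ground set $U=\{1,\dots,m\}$, sets $A_1,\dots,A_N\subseteq U$ and budget $B$. Set $p=m$, let $d_i=\mathbf 1_{A_i}\in\{0,1\}^m$, and let $\mathcal T=\{e_1,\dots,e_m\}$. Then $\ip{d_i}{e_j}=\mathbf 1[j\in A_i]$, and the null atom contributes $\ip{d_0}{e_j}=0$. Hence $\max_{i\in S_0}\ip{d_i}{e_j}=\mathbf 1[j\in\bigcup_{i\in S}A_i]$, and therefore
\[
F(S;\mathcal T)=\frac{1}{m}\Big|\bigcup_{i\in S}A_i\Big| .
\]
So $F$ equals the coverage function up to the fixed positive factor $1/m$. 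The feasible sets $\{S:|S|\le B\}$ are the same in both problems, so $S^\star$ is optimal for \eqref{eq:cov-design} exactly when it is optimal for Max Coverage. The construction uses only $\{0,1\}$ coordinates and has size polynomial in $m$ and $N$.

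\textbf{Step 2: transfer the hardness.} Suppose some polynomial-time algorithm $\mathcal A$ returns $S$ with $|S|\le B$ and $F(S;\mathcal T)\ge(1-1/e+\varepsilon)F(S^\star;\mathcal T)$. Run it on the constructed instance and multiply both sides by $m$. This gives $|\bigcup_{i\in S}A_i|\ge(1-1/e+\varepsilon)\,\mathrm{OPT}_{\mathrm{MC}}$, so $S$ is a polynomial-time $(1-1/e+\varepsilon)$-approximation for Max Coverage. That would imply $P=NP$ by Feige's result, which proves the theorem. Since the scaling is linear with no additive offset, the same $\varepsilon$ carries over; this holds because $d_0=\bm 0$ contributes exactly zero, so no constant term is added to $F$.

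\textbf{Main obstacle.} The combinatorial content is entirely in Feige's theorem, which we cite rather than reprove. The remaining care is in matching the hypotheses of that theorem. Feige's hardness holds for instances where $B$ is part of the input and every set is nonempty. It is commonly stated as ``for every $\varepsilon>0$, approximating within $1-1/e+\varepsilon$ is NP-hard.'' The original statement rests on $NP\not\subseteq DTIME(n^{O(\log\log n)})$; the later strengthening to $P\neq NP$ uses Dinur--Steurer's Set Cover hardness or the Label Cover / PCP route. We will cite the $P\neq NP$ version so that the hypothesis in the theorem statement is justified.

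Finally, we will note that the case $F(S^\star;\mathcal T)=0$ is trivial and does not arise on hard instances. Together with the greedy guarantee, this shows the $(1-1/e)$ factor is tight.
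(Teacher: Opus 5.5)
Your proposal is correct and follows the paper's proof: the same Max Coverage reduction giving $F(S;\mathcal T)=\frac{1}{m}\bigl|\bigcup_{i\in S}A_i\bigr|$, exact preservation of approximation ratios, and an appeal to Feige's $(1-1/e)$ threshold. One small correction to your side remark: Feige's $(1-1/e+\varepsilon)$ bound for max $k$-cover is already stated under $P\neq NP$ (only his $\ln n$ set-cover threshold needs the $n^{O(\log\log n)}$ assumption), so the Dinur--Steurer strengthening you plan to cite is unnecessary.
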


\begin{proof}[\textit{Sketch (Full proof in App.~\ref{app:theom4_4})}]
Max Coverage is NP hard to approximate within any factor strictly larger than $1-1/e$. Since the reduction preserves objective values up to a constant factor, any algorithm that exceeds this ratio for \eqref{eq:cov-design} would contradict the known hardness of Max Coverage.
\end{proof}

Despite these hardness results, the submodularity of $F$ implies that the greedy algorithm attains the optimal approximation factor.

\begin{theorem}[$(1-1/e)$ guarantee for greedy on $F$]\label{thm:greedy}
Under the constraint $|S|\le B$, the greedy algorithm that iteratively adds the index with the largest marginal increase in $F(\cdot;\mathcal{T})$ satisfies
\[
F(S_{\mathrm{greedy}};\mathcal{T})\ \ge\ (1-1/e)\,F(S^\star;\mathcal{T}),
\]
where $S^\star$ maximizes $F(\cdot;\mathcal{T})$ subject to the budget constraint.
\end{theorem}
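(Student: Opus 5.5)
The plan is to reduce the statement to the classical Nemhauser--Wolsey--Fisher argument. By Proposition~\ref{prop:submod}, $F(\cdot;\mathcal T)$ is monotone and submodular on $2^{[N]}$. The remaining hypothesis of that argument is normalization, and here the null atom does the work. Since $d_0=\bm 0$, we have $g_s(\emptyset)=\max_{i\in\{0\}}\ip{d_i}{s}=0$ for every $s$, so $F(\emptyset;\mathcal T)=0$. By monotonicity, $F(S;\mathcal T)\ge 0$ for all $S$. This matters because without $d_0$ the function could take negative values, and a multiplicative guarantee would then be meaningless.

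Let $S_0^{\mathrm g}=\emptyset\subseteq S_1^{\mathrm g}\subseteq\cdots\subseteq S_B^{\mathrm g}$ be the greedy iterates, and write $\mathrm{OPT}=F(S^\star;\mathcal T)$ with $|S^\star|\le B$. The key inequality, for each $k<B$, is
\[
\mathrm{OPT}-F(S_k^{\mathrm g})\ \le\ B\,\big(F(S_{k+1}^{\mathrm g})-F(S_k^{\mathrm g})\big).
\]
To prove it, I would first use monotonicity to get $\mathrm{OPT}\le F(S_k^{\mathrm g}\cup S^\star)$. Next, I would add the elements of $S^\star\setminus S_k^{\mathrm g}$ one at a time and telescope. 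Submodularity bounds each increment by its marginal gain with respect to $S_k^{\mathrm g}$. The greedy rule then bounds each such marginal gain by $F(S_{k+1}^{\mathrm g})-F(S_k^{\mathrm g})$. Since there are at most $B$ such increments, the inequality follows.

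Setting $\delta_k:=\mathrm{OPT}-F(S_k^{\mathrm g})$, the key inequality rearranges to $\delta_{k+1}\le(1-1/B)\,\delta_k$. Iterating from $\delta_0=\mathrm{OPT}-0=\mathrm{OPT}$ gives
\[
\delta_B\le(1-1/B)^B\,\mathrm{OPT}\le e^{-1}\,\mathrm{OPT},
\]
which is the claim after rearranging.

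I expect the main obstacle to be bookkeeping rather than any deep step. First, the normalization $F(\emptyset)=0$ must be checked explicitly, and it is exactly the role of the augmented index set $S_0$. Second, the greedy process may terminate early, for example if $N<B$ or if the certifiable-stopping rule in Algorithm~\ref{alg:greedy} fires. In either case the best marginal gain is zero, so the key inequality gives $\mathrm{OPT}\le F(S_k^{\mathrm g})$ and the bound holds trivially. Likewise, if $N\le B$ the greedy set eventually equals $[N]$, which is optimal by monotonicity. Ties in the greedy choice do not affect the argument, since only maximality of the chosen gain is used.
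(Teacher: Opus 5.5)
Your proposal is correct and follows the paper's route: invoke Proposition~\ref{prop:submod} for monotonicity and submodularity, use the null atom $d_0=\bm 0$ to get $F(\emptyset;\mathcal T)=0$, and then apply the Nemhauser--Wolsey--Fisher guarantee. The only difference is that the paper cites the classical theorem, whereas you write out its standard proof, namely the inequality $\mathrm{OPT}-F(S_k)\le B\,(F(S_{k+1})-F(S_k))$ and the recursion giving $(1-1/B)^B\le e^{-1}$.
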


\begin{corollary}[Approximation optimality of the greedy scheme]\label{cor:greedy-optimal}
Combining Theorem~\ref{thm:hardness-approx} with Theorem~\ref{thm:greedy}, the greedy algorithm is approximation optimal for the directional coverage design problem under the standard assumption $P\neq NP$. No polynomial time algorithm achieves a strictly better worst case approximation factor.
\end{corollary}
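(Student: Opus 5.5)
The corollary combines the two preceding theorems, so the plan is to check that they concern exactly the same problem class and the same notion of approximation. First, I would fix the problem class precisely: instances of \eqref{eq:cov-design} given by a finite dictionary $\mathcal D$, a finite evaluation set $\mathcal T$, and a budget $B$, all with rational (or $\{0,1\}$) entries so that running time is measured in the input encoding length. An algorithm has worst-case approximation factor $\rho$ if, on every instance, it returns $S$ with $|S|\le B$ and $F(S;\mathcal T)\ge \rho\,F(S^\star;\mathcal T)$.

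\textbf{Achievability.} Next I would check that greedy is a polynomial-time algorithm attaining $\rho=1-1/e$. Each of the $B\le N$ rounds evaluates at most $N$ marginal gains. Each marginal gain costs $O(|\mathcal T|\,p)$ arithmetic operations, or $O(|\mathcal T|)$ if the current maxima $\max_{i\in S_0}\ip{d_i}{s}$ are cached. The total is therefore polynomial in $N$, $|\mathcal T|$ and $p$. Theorem~\ref{thm:greedy}, which rests on Proposition~\ref{prop:submod}, then gives $F(S_{\mathrm{greedy}};\mathcal T)\ge(1-1/e)F(S^\star;\mathcal T)$ on every instance. One point needs care: the Nemhauser bound requires $F(\emptyset;\mathcal T)$ to be the baseline. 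Here $F(\emptyset;\mathcal T)=\frac{1}{|\mathcal T|}\sum_s\max(0,\cdot)$ restricted to $d_0$, which equals $0$ because $d_0=\bm 0$, and $F\ge 0$ throughout. So the guarantee holds in the multiplicative form stated.

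\textbf{Optimality.} Suppose, for contradiction, that some polynomial-time algorithm has worst-case factor $\rho>1-1/e$. Set $\varepsilon:=\rho-(1-1/e)>0$. The algorithm would then output $S$ with $F(S;\mathcal T)\ge(1-1/e+\varepsilon)F(S^\star;\mathcal T)$ on every instance. In particular this holds on the $\{0,1\}$ instances from the reduction in Theorem~\ref{thm:np-hard}, which contradicts Theorem~\ref{thm:hardness-approx} unless $P=NP$. Combining the two halves shows that, under $P\neq NP$, $1-1/e$ is the best worst-case factor achievable in polynomial time, and greedy attains it.

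The main obstacle is mostly bookkeeping. One has to make sure the hardness in Theorem~\ref{thm:hardness-approx} is stated for the same multiplicative criterion and the same class of instances on which greedy is analyzed. One also has to address the degenerate case $F(S^\star;\mathcal T)=0$, where every algorithm trivially achieves any factor. That case does not affect the argument, because the hard Max Coverage instances have strictly positive optimum.
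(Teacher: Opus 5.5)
Your proposal is correct and takes the paper's route. The paper's proof only states that the corollary is immediate from Theorem~\ref{thm:hardness-approx} and Theorem~\ref{thm:greedy}, and your achievability/optimality split is exactly that combination, with the implicit points spelled out: that greedy runs in polynomial time, that $F(\emptyset;\mathcal T)=0$, and that the hard instances have a strictly positive optimum.
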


\begin{remark}[Selection objective and certification]\label{rmk:selection-vs-cert}
The selection step optimizes the average coverage surrogate $F$ because it is monotone and submodular, which gives the tight greedy guarantee in Theorem~\ref{thm:greedy}. The robust value gap, however, is certified through the worst case deficit $\max_{s\in\mathcal T}\Delta_S(s)$ via the $\varepsilon$ net bridge in Theorem~\ref{thm:enet-bridge}. This distinction is explicit in the reported metrics: the algorithm optimizes $F$ and reports the certificate term a posteriori.

Appendix~\ref{app:average_worst} provides a finite probe set bridge between these quantities. In particular, Proposition~\ref{prop:avg_to_worst_bridge} bounds the worst case deficit on $\mathcal T$ by the surrogate gap $F([N];\mathcal T)-F(S;\mathcal T)$ plus a probe set occupancy term and a resolution term, and Corollary~\ref{cor:surrogate_to_value_gap} combines this bound with Theorem~\ref{thm:enet-bridge} to obtain a robust value gap bound in terms of the surrogate gap.
\end{remark}

\subsection{Computational Complexity}

The proposed method is tractable for standard convex problems. By the support representation in \eqref{eq:support}, the objective
\[
\Phi_S(x)=b(x)+r\max_{i\in S_0}\ip{d_i}{q(x)}
\]
allows the minimization in line 3 of Algorithm~\ref{alg:greedy} to be formulated as a convex problem whose size grows linearly with $|S|$ (for example, by linearizing the maximum in LP or SOCP instances). The pessimization step over $[N]$ reduces to support evaluation or an equivalent dual computation, and the selection step computes marginal gains of $F$ over $\mathcal T$, which is inexpensive once the probe set is fixed.

%% file: content/2_Experiments.tex
\input{content/2_1_Exp_Teacher}
\input{content/2_2_Exp_High}
\input{content/2_3_Exp_Uncertanty}

%% file: content/2_1_Exp_Teacher.tex
\section{Computational Experiments}
\label{sec:computational_exp}

Uncertainty directions come from a finite dictionary $\mathcal{D}=\{d_i\}_{i=1}^N \subset \mathbb{R}^p$. Given a budget $B$, each method selects a subset $S \subseteq \{1,\dots,N\}$ with $|S|\le B$, which defines the atomic uncertainty set $\mathcal{U}_S(r)$ (Section~\ref{sec:formulation}).

A common evaluation pipeline is used across settings. 
In Frozen Features, each method selects $S$ and evaluates certified robustness at a fixed threat radius $\varepsilon$ (Appendix~\ref{app:exp-details}) without radius calibration. Unless noted, results are averaged over random seeds, and calibrated settings use fixed $(\alpha,\delta)$.

For each method, the evaluation direction set $\mathcal{T}_{\mathrm{eval}}$ combines directions revealed by the oracle based procedure (Algorithm~\ref{alg:greedy}) with a held out pool when available. The reported metrics are the average coverage $F(S;\mathcal{T}_{\mathrm{eval}})$ and the proxy worst case
\[
G(S;\mathcal{T}_{\mathrm{eval}})\ :=\ r_{\mathrm{eval}} \max_{s\in\mathcal{T}_{\mathrm{eval}}}\Delta_S(s),
\]
where
\[
\Delta_S(s)=\max_{i\in[N]_0}\langle d_i,s\rangle-\max_{i\in S_0}\langle d_i,s\rangle,
\]
\[
S_0=S\cup\{0\},\ [N]_0=\{0,1,\dots,N\},\ d_0=0.
\]
The radius is set to $r_{\mathrm{eval}}=\widehat r$ in calibrated settings and $r_{\mathrm{eval}}=\varepsilon$ in Frozen Features.

Baselines include Greedy, Random, MaxNorm, and Greedy MaxGap, the minimax rule of Remark~\ref{rmk:greedy-maxgap} that targets $\max_{s\in\mathcal{T}}\Delta_S(s)$ on the current direction set. In Section~\ref{sec:synthetic}, we also report TopAct and a Mahalanobis ellipsoid fit. Random selects $B$ atoms uniformly without replacement from $\mathcal{D}$, and we report the mean and standard deviation over $K$ independent draws. Selection cost is reported through $B$, wall clock time, oracle calls, and $\rho=B/N$. Oracle methods use at most $B$ oracle queries by construction. Runtime includes robust solves and selection overhead under fixed solver and hardware settings (Appendix~\ref{app:exp-details}).

\subsection{Synthetic geometric validation}
\label{sec:synthetic}

This experiment isolates directional design in two dimensions. An X-shaped distribution is constructed and the dictionary is augmented with vertical decoy atoms that have large norm but low alignment with the evaluation direction. Uncertainty vectors $u\in\mathbb{R}^2$ are sampled near the diagonals, $\mathcal{D}$ is built from signal atoms and decoys, and the full dictionary support is compared with the restricted support induced by subsets $S$ of size $B$ in direction $x_{\mathrm{eval}}=(1,0)$ at unit radius.
Greedy Coverage and Greedy MaxGap are evaluated on a fixed direction pool $\mathcal{T}$, together with Random nested subsets, MaxNorm, and TopAct. A Mahalanobis ellipsoid fit on calibration samples is included as a convex baseline.

Figure~\ref{fig:synthetic-gap} reports the support error $z_{\mathrm{full}}-z_S$ in direction $x_{\mathrm{eval}}$. Greedy Coverage and TopAct reduce the error at small budgets by selecting atoms with large projection onto $x_{\mathrm{eval}}$. Random improves more slowly because part of the budget is spent on decoys. MaxNorm performs poorly because decoys have large norms but contribute little support in direction $x_{\mathrm{eval}}$. Greedy MaxGap reduces a proxy worst case criterion on $\mathcal{T}$ and typically requires larger budgets to match average coverage.

\begin{figure}[t]
  \centering
  \includegraphics[width=0.9\linewidth]{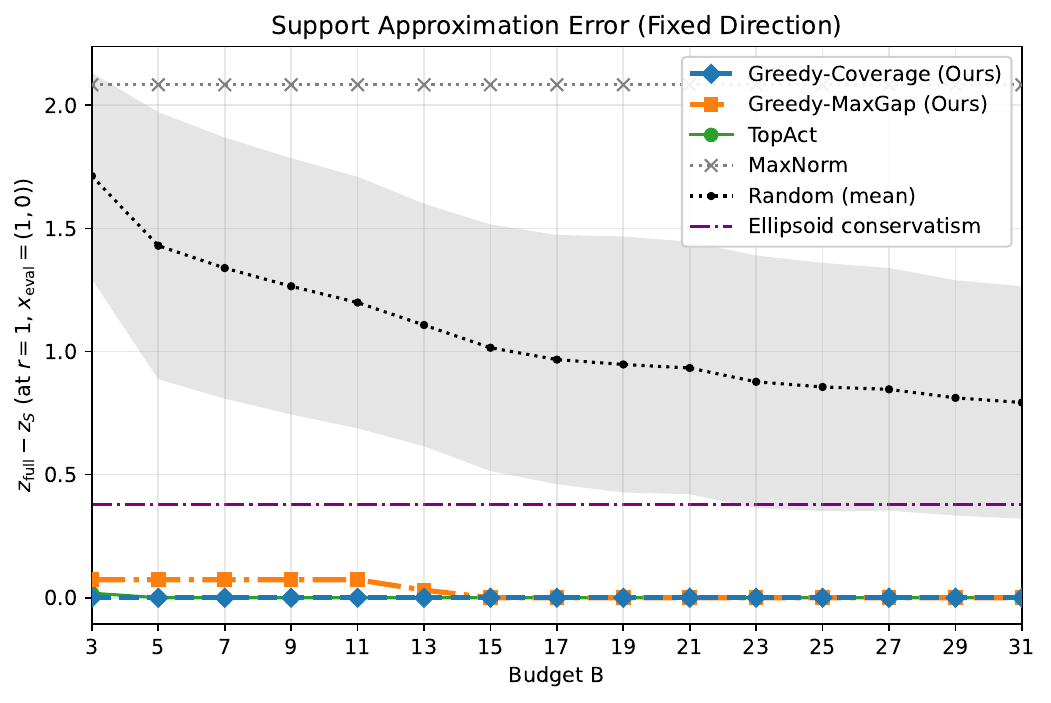}
  \caption{\textbf{Support approximation error.}
  Error versus budget $B$ in direction $x_{\mathrm{eval}}$.}
  \label{fig:synthetic-gap}
\end{figure}

\begin{figure}[t]
  \centering
  \includegraphics[width=0.8\linewidth]{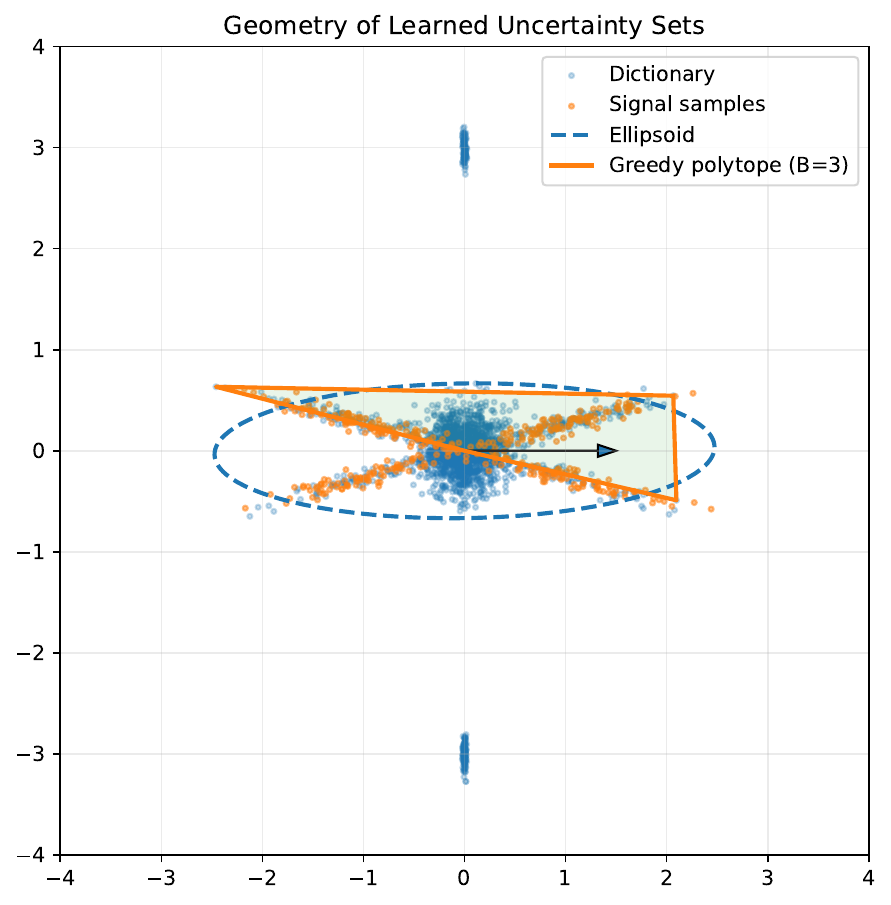}
  \caption{\textbf{Geometry of learned uncertainty sets.}
  Atomic polytope induced by $S$ versus an ellipsoidal baseline.}
  \label{fig:synthetic-geometry}
\end{figure}

Figure~\ref{fig:synthetic-geometry} illustrates the mechanism. With small $B$, the atomic polytope can be anisotropic because it uses a small set of selected atoms to match support in relevant directions. The ellipsoidal baseline summarizes the distribution globally and can be conservative in direction $x_{\mathrm{eval}}$ when decoys inflate dispersion.

\begin{figure}[!h]
  \centering
  \includegraphics[width=0.9\linewidth]{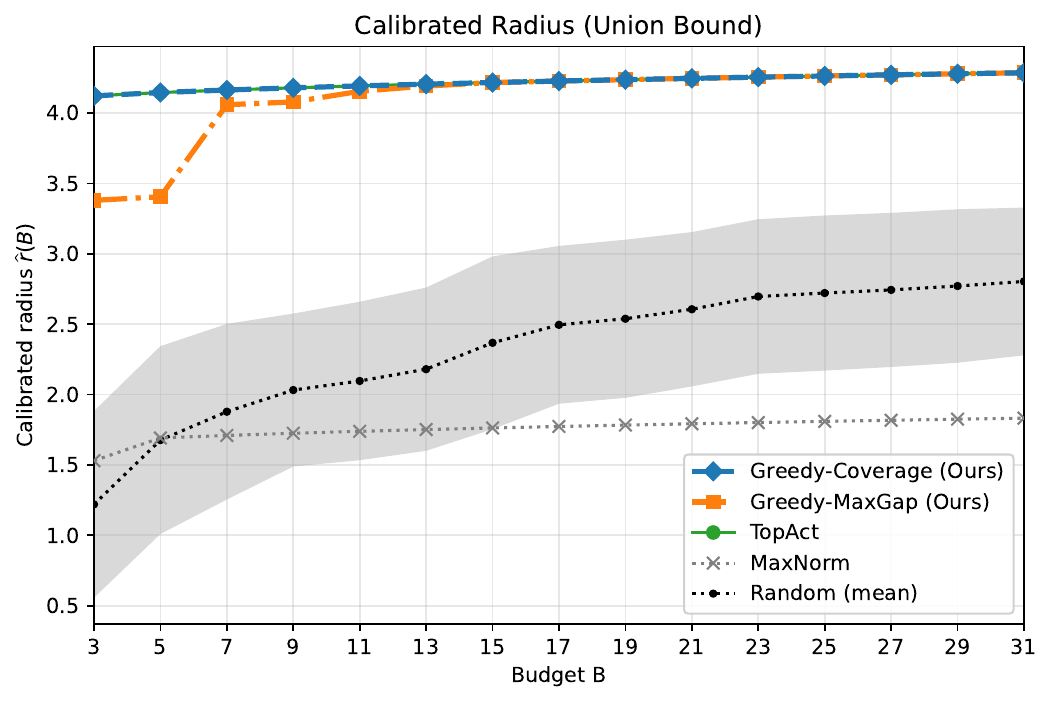}
  \caption{\textbf{Calibrated radius.}
  Calibrated $\widehat r(B)$ versus budget $B$ under DKW union calibration.}
  \label{fig:synthetic-radius}
\end{figure}

\clearpage

Figure~\ref{fig:synthetic-radius} reports $r_b(B)$. As $B$ increases, the union bound correction grows with the number of candidate subsets, so $r_b(B)$ increases after the empirical quantile stabilizes. Appendix~\ref{app:exp-details} compares DKW union and split calibration and reports sweeps over $(\alpha,\delta)$ with test split violations. DKW union remains conservative, with mean violation gap $\mathrm{viol}_d-\alpha \in [-0.0086,-0.0082]$ across the tested $\delta$, while split calibration is mildly anti conservative, with mean gap in $[0.0037,0.0048]$; see Tables~\ref{tab:f1_calibration_method_delta} and~\ref{tab:f1_calibration_budget_method} and Figure~\ref{fig:f1_calibration_conservatism}.

%% file: content/2_2_Exp_High.tex
\subsection{Scaling of certified coverage with large dictionaries}
\label{sec:scaling}

We study directional selection with large uncertainty dictionaries in high ambient dimension. To keep training convex, we train linear robust classifiers on frozen features extracted from a pretrained convolutional network. We consider binary tasks built from CIFAR-10 and CIFAR-100, and report results in two regimes: Hard tasks from CIFAR-10 and Expert tasks from CIFAR-100. The appendix lists all task pairs and additional diagnostics.

For each task and seed, we split data into disjoint sets for fitting, direction selection, reporting, and final testing. We mine a candidate dictionary $\mathcal D=\{d_i\}_{i=1}^N$ by projected gradient ascent in feature space under an $\ell_2$ radius $\varepsilon$. We evaluate scaling with dictionary size by measuring coverage as a function of the budget density $\rho=B/N$.

Figure~\ref{fig:scaling_cov_vs_rho} reports coverage versus $\rho$ for multiple values of $N$. At matched density, Greedy maintains coverage as $N$ increases, while Random degrades. This gap grows with $N$, consistent with a haystack effect in which random selection allocates budget to directions that do not contribute to the coverage objective.

\begin{figure}[h]
  \centering
  \includegraphics[width=\linewidth]{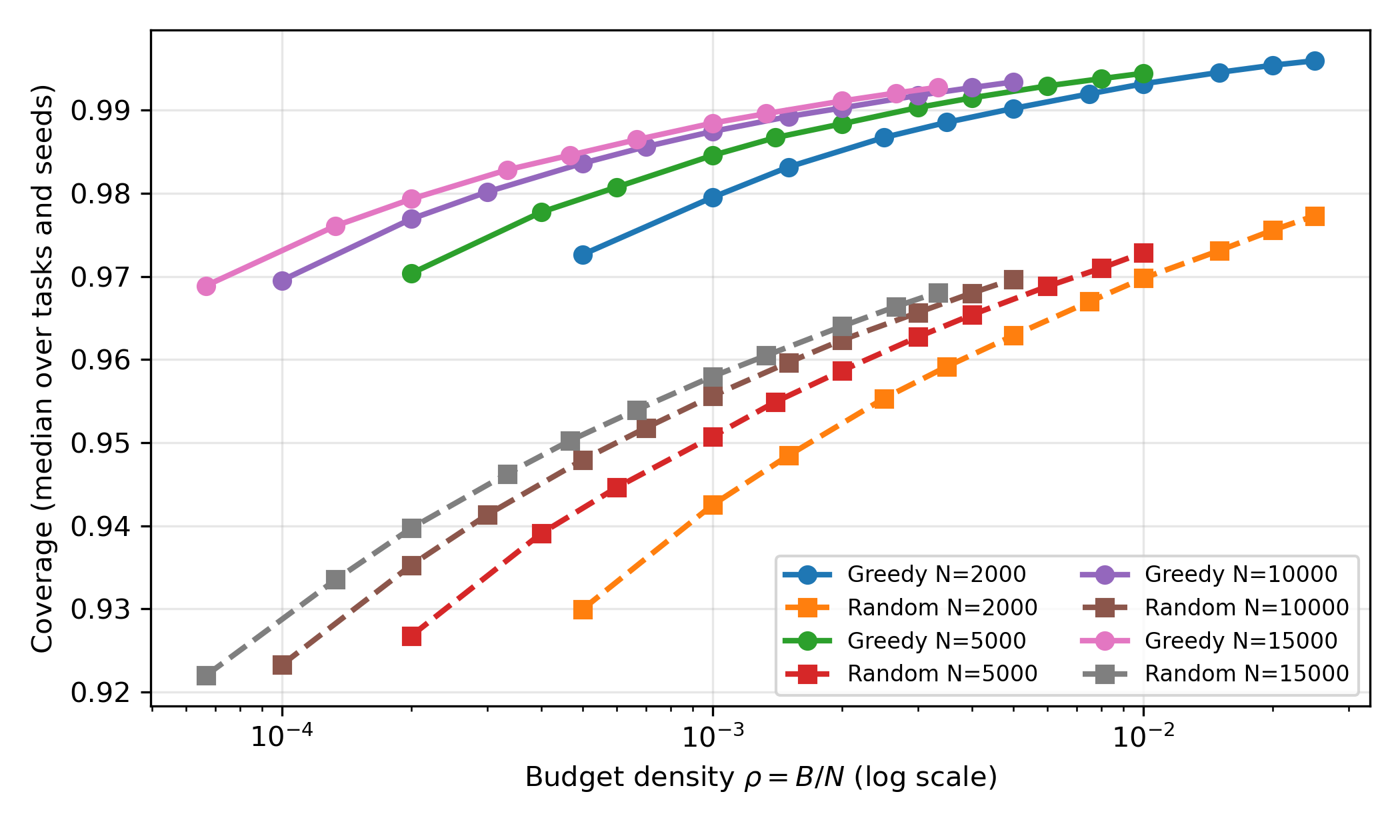}
  \caption{\textbf{Coverage versus budget density.}
  Median coverage over tasks and seeds as a function of $\rho=B/N$, shown for multiple dictionary sizes $N$.}
  \label{fig:scaling_cov_vs_rho}
\end{figure}

To express these trends as a cost, we fix a target coverage level and report the smallest budget $B$ that reaches the target. Figure~\ref{fig:scaling_minB} shows that Greedy reaches coverage $\ge 0.990$ with budgets that increase slowly with $N$, while Random does not reach the target within the tested range $B\le 50$. Table~\ref{tab:minB_target} lists the corresponding budgets and reports an extrapolated estimate for Random, since it does not reach the target for $B\le B_{\max}$.

\begin{figure}[h]
  \centering
  \includegraphics[width=\linewidth]{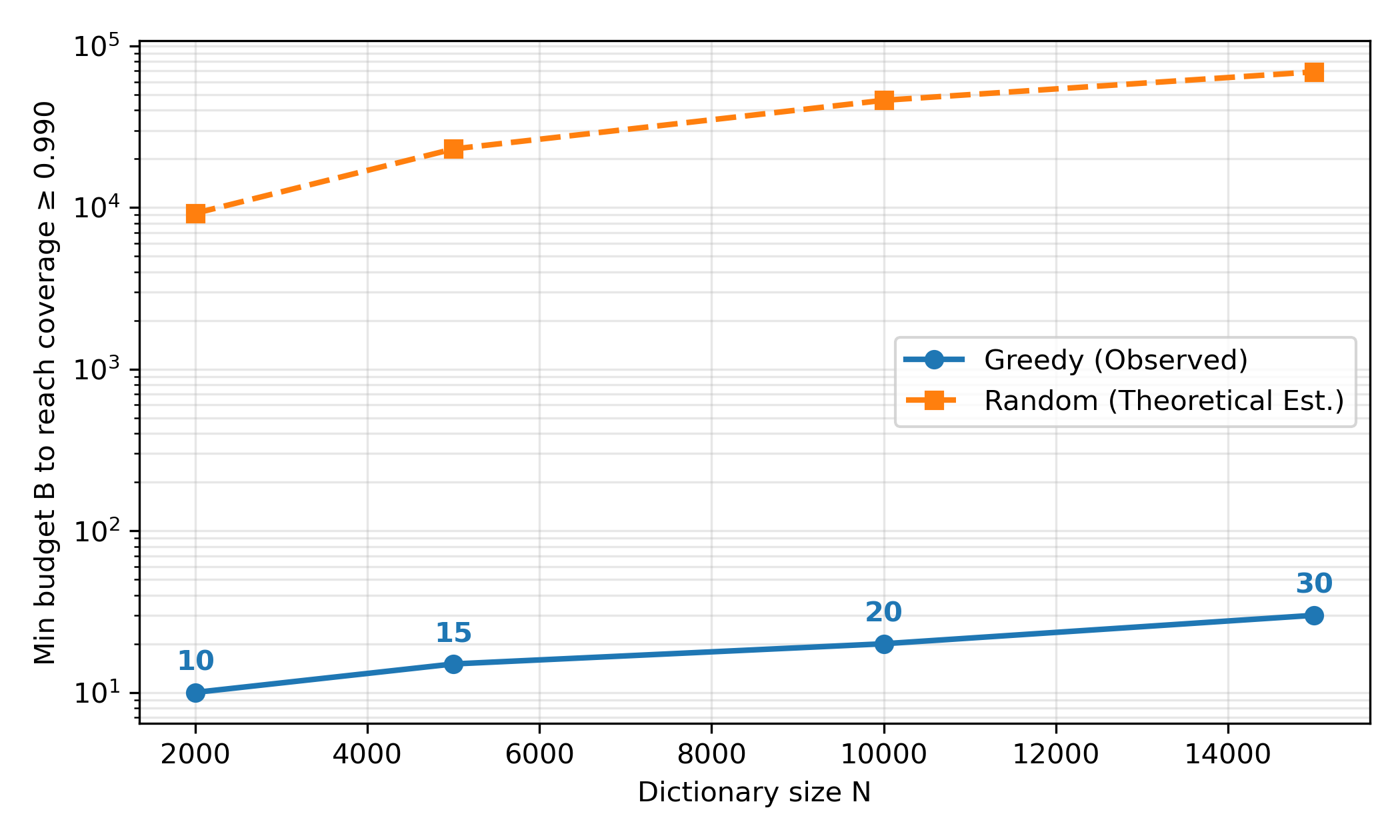}
  \caption{
  Minimum $B$ needed to reach coverage $\ge 0.990$ as a function of dictionary size $N$. Random does not reach the target for $B\le 50$.}
  \label{fig:scaling_minB}
\end{figure}

\begin{table}[ht] 
\centering
\caption{\textbf{Minimum budget for target coverage ($\tau=0.99$).}
Greedy reaches the target with small budgets ($B \le 20$), while Random requires orders of magnitude more (extrapolated estimates $^\ast$).}
\label{tab:minB_target}
\vspace{4pt} 
\resizebox{\linewidth}{!}{ 
\begin{tabular}{@{}l c c r r@{}} 
\toprule
\textbf{Dict. Size} ($N$) & \textbf{Target} & \textbf{$B_{\max}$} & \textbf{Greedy $B$} & \textbf{Random $B$ (Est.)} \\
\midrule
2,000  & 0.99 & 50 & \textbf{10} & $9,208^\ast$ \\
5,000  & 0.99 & 50 & \textbf{15} & $23,023^\ast$ \\
10,000 & 0.99 & 50 & \textbf{20} & $46,049^\ast$ \\
15,000 & 0.99 & 50 & \textbf{30} & $69,075^\ast$ \\

\bottomrule
\end{tabular}
}
\vspace{2pt}
\scriptsize{$^\ast$ Extrapolated via coupon collector approximation.}
\end{table}

%% file: content/2_3_Exp_Uncertanty.tex
\begin{table*}[t]
\centering\small
\caption{Deterministic robustness rate across budgets: percentage of the 64 seeds feasible under the full dictionary baseline with zero violations.}
\label{tab:det_success_oral}
\begin{tabular}{lrrrrrrrrr}
\toprule
Policy & $B{=}1$ & 2 & 3 & 5 & 7 & 10 & 15 & 20 & 30 \\
\midrule
greedy-maxgap   & 40.6 & 51.6 & 56.2 & 59.4 & 56.2 & 54.7 & 53.1 & 42.2 & 39.1 \\
greedy-coverage & 40.6 & 54.7 & 59.4 & 62.5 & 62.5 & 54.7 & 45.3 & 34.4 & 34.4 \\
max-gamma       & 28.1 & 26.6 & 29.7 & 37.5 & 46.9 & 42.2 & 35.9 & 29.7 & 31.2 \\
\bottomrule
\end{tabular}
\end{table*}

\subsection{Budget Efficiency and Reliability Analysis}
\label{sec:synthetic_results}

The synthetic construction follows \citet{LozanoBorrero2025} with modifications for uncertainty set selection. The dimensions are $n=q=30$ and $m=p=3$, and problem tightness is controlled by $\texttt{rhs}=0.9$, where lower values correspond to tighter constraints. Each instance is generated from a random seed.

A total of 100 instances are generated, of which 64 instances (64.0\%) are robust feasible under the full dictionary baseline, with zero violations. The onset budget and budget-wise efficiency analyses focus on this subset to isolate budget efficiency from baseline infeasibility. Appendix~\ref{app:all100_sec53} reports certification counts over all 100 instances.

Three deterministic policies (greedy-maxgap, greedy-coverage, and max-gamma) are compared with a random baseline. Efficiency is measured by the onset budget $B^\star$, the minimum budget that achieves zero violations.

Table~\ref{tab:det_success_oral} reports zero-violation rates across budgets, and Table~\ref{tab:Bstar_oral} summarizes $B^\star$ over the same 64 instances. Success rates are not monotonic in $B$ because each budget level is solved independently and the selected sets are not nested. Both greedy-maxgap and greedy-coverage reach a median onset of $B^\star=2$, compared to 7 for max-gamma; within the evaluated budget grid, greedy-maxgap certifies 58 of 64 instances.

\begin{table}[!h]
\centering\small
\setlength{\tabcolsep}{4pt}
\caption{Onset of certified robustness ($B^\star$) on seeds feasible under the full dictionary baseline ($N=64$). Robust Rate reports the number of instances with zero violations within the evaluated budget grid. Mean and standard deviation are computed over instances that achieve robustness.}
\label{tab:Bstar_oral}
\begin{tabular}{lccc}
\toprule
Policy & Robust Rate & Median $B^\star$ & Mean $\pm$ Std. \\
\midrule
greedy-maxgap   & 58/64 & 2 & $6.3 \pm 8.3$ \\
greedy-coverage & 55/64 & 2 & $6.3 \pm 9.5$ \\
max-gamma       & 53/64 & 7 & $9.7 \pm 10.0$ \\
\bottomrule
\end{tabular}
\end{table}

The random baseline has no deterministic guarantee at a fixed budget, so reliability is evaluated with 50 independent repetitions per instance. Table~\ref{tab:random_patterns_oral} reports the mean success rate across the 64 instances feasible under the full dictionary baseline, together with the standard deviation across instances. The dispersion remains large across budgets, including larger values of $B$.

\begin{table}[!h]
\centering\small
\caption{Reliability of the Random baseline on instances feasible under the full dictionary baseline ($N=64$). For each budget $B$, success is the fraction of repetitions with zero violations, and the table reports the mean and standard deviation across instances.}
\label{tab:random_patterns_oral}
\begin{tabular}{lrrrrr}
\toprule
B & 3 & 5 & 7 & 10 & 15 \\
\midrule
Reps/seed & 50 & 50 & 50 & 50 & 50 \\
Success (\%) & 22.9 & 37.2 & 46.7 & 60.1 & 69.8 \\
Std & $\pm$ 22.8 & $\pm$ 29.8 & $\pm$ 33.1 & $\pm$ 33.6 & $\pm$ 33.1 \\
\bottomrule
\end{tabular}
\end{table}

\section{Discussion}
\label{sec:discussion}

\noindent\textbf{How does informed selection compare to uninformed baselines?}
Across geometric validation (Section~\ref{sec:synthetic}) and scaling benchmarks (Section~\ref{sec:scaling}), greedy policies reduce support approximation error by aligning selected atoms with the evaluation directions that drive the support function. Random selection spends budget on decoys or irrelevant constraints (Figure~\ref{fig:synthetic-gap}), weakening coverage and amplifying the haystack effect as the dictionary grows (Figure~\ref{fig:scaling_cov_vs_rho}).

\noindent\textbf{How should the $\varepsilon$ net term in the value certificate be interpreted?}
Theorem~\ref{thm:enet-bridge} decomposes the robust value gap into a discrete alignment deficit on a finite probe set $T$ and a discretization term $2\varepsilon$. Practical performance depends on the alignment term $\max_{s\in T}\Delta_S(s)$ and on how well $T$ approximates $q(X)$. Appendix~\ref{app:epsilon_practicality} introduces a proxy $\hat{\varepsilon}(T)$ and a stopping rule based on the empirical certificate $\max_{s\in T}\Delta_S(s) + 2\hat{\varepsilon}(T)$, making the bound usable with finite probe pools. Appendix~\ref{sec:greedy-inexact} shows that this certificate remains informative under inexact selection and inexact oracle evaluations.

\noindent\textbf{Does this efficiency translate to integer robust optimization?}
In integer programming (Section~\ref{sec:synthetic_results}), Greedy MaxGap certifies robustness for most robust feasible instances with smaller budgets than max-gamma (Table~\ref{tab:Bstar_oral}). This pattern indicates that feasibility is often driven by a small set of active constraints that the greedy objective identifies.

\noindent\textbf{Why are success rates not monotonic?}
Success rates fluctuate because each budget is solved independently (Table~\ref{tab:det_success_oral}). Larger budgets explore new combinations of scenarios rather than extending earlier selections, so rates can decrease at some levels and improve at others. A nested budget schedule would recover monotonicity if required.

\noindent\textbf{Is random selection a viable heuristic?}
Table~\ref{tab:random_patterns_oral} shows substantial variance for random selection even at larger budgets, with inconsistent certification across runs. Random selection is a high variance heuristic, while greedy policies provide deterministic guarantees with small sets.


%% file: content/3_Conclusion.tex
\section{Conclusions}
\label{sec:conclusion}

The paper presents a framework for directional uncertainty sets from a finite dictionary. It formulates design as a submodular coverage problem, yielding a greedy method with approximation guarantees and tractability. The framework also provides finite-sample guarantees for radius calibration. Experiments show robust direction selection from large dictionaries, stable coverage in high dimensions, and small budget certification in integer programs.

%% file: content/0_appendix.tex
\newpage
\appendix
\onecolumn
\title{Which Directions Matter? Sparse Design for Affine Robust Optimization\\(Supplementary Material)}
\maketitle

\input{content/3_Related}

\section{Scope and limitations.}
\label{sec:limitations}

Our framework focuses on affine robust optimization with finite dictionaries. While the design problem is NP-hard, our submodular formulation ensures the greedy strategy provides a strong approximation. The primary limitation is the dependence on the input dictionary; if the candidate pool lacks the true worst-case directions, the resulting set will be optimistic. Therefore, the method is best combined with high-recall mining procedures or domain knowledge. Future work will extend this approach to end-to-end dictionary learning and non-affine uncertainty structures.

\section{Broader impact.}
This framework supports reliability oriented decision making in settings where structured perturbations matter, such as planning and risk sensitive prediction. It can increase conservatism and reduce performance for some groups if the dictionary reflects biased data or incomplete domain knowledge. The dictionary, budget, and calibration target encode which shifts are treated as relevant. We recommend documenting these choices, reporting selected directions, sensitivity to budget and radius, and evaluating coverage on held out data.

\input{content/0_appendix_theo_1}

\input{content/0_appendix_theo_2}
\input{content/0_appendix_exp}

%% file: content/3_Related.tex
\section{Related Work}

\subsection{Scenario-Based Robust Optimization}
Robust optimization studies worst case decision making under uncertainty and provides tractable reformulations for broad classes of programs through uncertainty sets and their support functions \citep{BenTal1998, ElGhaoui1997}.
When uncertainty is complex, a common alternative is to approximate it with a finite set of representative scenarios. This scenario approach samples constraints and yields finite sample guarantees for chance constrained problems \citep{Calafiore2006, Campi2008}.
Recent work studies scenario reduction for one and two stage robust optimization with discrete uncertainty and derives approximation guarantees that hold for any reduced uncertainty set \citep{goerigk2023optimal}. Related work proposes data-driven rules to identify relevant scenarios that produce strong bounds early in iterative constraint or variable generation methods for robust combinatorial optimization \citep{goerigk2025data}.
These results provide certificates for probabilistic constraint satisfaction or for approximation quality and convergence speed under scenario reduction. They do not provide selection aware certificates that quantify how a reduced representation deviates from a richer directional robust model, such as computable bounds on the robust value gap induced by selecting a subset of directions.

\subsection{Adversarial Robust Learning}
Robust optimization has influenced robust learning in several forms. Robust support vector machines relate robust losses to regularization \citep{Xu2009}, and distributionally robust learning formalizes robustness to distribution shift \citep{Namkoong2016, Sinha2018}.
The study of adversarial examples \citep{Szegedy2014} led to adversarial training, which can be written as a robust optimization problem over norm bounded perturbations \citep{Goodfellow2015, Madry2018}. To reduce cost or conservatism, structured perturbation families have been considered, including geometric transformations \citep{Engstrom2019}.
Certified defenses provide worst case guarantees for specific perturbation classes, including convex relaxation methods \citep{Wong2018}, semidefinite bounds \citep{Raghunathan2018}, and randomized smoothing \citep{Cohen2019}.
Most of this literature treats the perturbation family as fixed or learned end to end. In contrast, the present setting selects a small subset from a large, pre specified dictionary of candidate directions and provides an explicit certificate for the approximation loss relative to the full directional model.

\subsection{Submodular Subset Selection}
The selection problem is closely connected to submodular optimization. Many coverage objectives are monotone and submodular, and greedy algorithms achieve approximation guarantees in this setting. The classical result of \cite{Nemhauser1978} shows that greedy attains a $(1-1/e)$ factor for monotone submodular maximization under a cardinality constraint, and that this factor is tight under standard complexity assumptions.
Submodularity has been used for data subset selection and active learning \citep{Wei2015}, as well as for large scale variants based on distributed and streaming computation \citep{Mirzasoleiman2015}. Surveys also cover sensing and information gathering objectives \citep{Krause2014}. Related work on coresets selects representative points for learning objectives \citep{Sener2018}.
The central difficulty here is not only to optimize a combinatorial objective, but to identify a coverage functional whose value directly upper bounds robust degradation. The selected objects are uncertainty directions rather than data points, while the resulting optimization follows the same greedy principle once the appropriate functional is defined.

\subsection{Calibration and Conformal Guarantees}
After selecting directions, the radius of the uncertainty set must be calibrated. Small radii may under protect, whereas large radii may be overly conservative. Calibration of probabilistic outputs is widely studied in prediction systems \citep{Zadrozny2002, Guo2017}.
Conformal prediction provides distribution free finite sample coverage guarantees by calibrating a score on held out data \citep{Vovk2005, Shafer2008}, with developments for regression and quantile based intervals \citep{Lei2018, Romano2019}.
Conformal robust optimization yields finite sample coverage certificates for uncertainty sets, but typically does not address the representation problem of selecting which candidate directions to retain, nor provide a deterministic certificate for how selection changes the robust objective. The radius calibration in this work targets a prescribed out-of-sample coverage level for the robust solution, complementing the deterministic selection aware certificate.

%% file: content/0_appendix_theo_1.tex
\section{Proof of Theorems}


\subsection{Proof of Proposition~\ref{prop:gap-ptwise}}
\label{app:prop2_3}

\begin{proof}
Recall
\[
\Phi_S(x)=b(x)+r\max_{i\in S_0}\langle d_i,q(x)\rangle,
\qquad
\Phi_{[N]}(x)=b(x)+r\max_{i\in [N]_0}\langle d_i,q(x)\rangle.
\]
Therefore, for every $x\in X$,
\begin{align*}
\Phi_{[N]}(x)-\Phi_S(x)
&=r\left(\max_{i\in [N]_0}\langle d_i,q(x)\rangle-\max_{i\in S_0}\langle d_i,q(x)\rangle\right)\\
&=r\,\Delta_S(q(x)),
\end{align*}
which is the claimed identity.
\end{proof}

\subsection{Proof of Corollary~\ref{cor:igualdad-valores}}
\label{app:cor2_4}

\begin{proof}
Let $x_S^\star\in\arg\min_{x}\Phi_S(x)$. By Proposition~\ref{prop:gap-ptwise},
\[
\Phi_{[N]}(x_S^\star)-\Phi_S(x_S^\star)=r\,\Delta_S(q(x_S^\star)).
\]
If $\Delta_S(q(x_S^\star))=0$, then $\Phi_{[N]}(x_S^\star)=\Phi_S(x_S^\star)$.

Also, $\Delta_S(\cdot)\ge 0$, so Proposition~\ref{prop:gap-ptwise} implies
\[
\Phi_{[N]}(x)\ge \Phi_S(x)\qquad\forall x\in X.
\]
Hence
\[
\min_x \Phi_{[N]}(x)\ge \min_x \Phi_S(x)=\Phi_S(x_S^\star)=\Phi_{[N]}(x_S^\star)\ge \min_x \Phi_{[N]}(x),
\]
which proves
\[
\min_x \Phi_{[N]}(x)=\min_x \Phi_S(x).
\]
\end{proof}


\subsection{Proof of Theorem~\ref{thm:enet-bridge}}
\label{app:theom3_1}

\begin{proof}
For any index set $A\subseteq [N]_0$, define
\[
M_A(s) := \max_{i\in A}\ip{d_i}{s}, \qquad s\in\R^p.
\]
Then $\Delta_S(s)=M_{[N]_0}(s)-M_{S_0}(s)$.

We first record a Lipschitz property (with respect to $d_D$): for any $A\subseteq [N]_0$ and any $s,s'\in\R^p$,
\begin{align*}
|M_A(s)-M_A(s')|
&\le \max_{i\in A}\left|\ip{d_i}{s-s'}\right|
\le \max_{i\in [N]_0}\left|\ip{d_i}{s-s'}\right|
= d_D(s,s').
\end{align*}
Indeed, for any $i\in A$, $\ip{d_i}{s}\le \ip{d_i}{s'}+| \ip{d_i}{s-s'}|$, so taking maxima over $i\in A$ gives
$M_A(s)\le M_A(s')+d_D(s,s')$; exchanging $s,s'$ yields the reverse inequality.

Now fix an arbitrary $x\in X$. Since $T$ is an $\varepsilon$-net of $q(X)$ in $d_D$, there exists $t=t(x)\in T$ such that
\[
d_D(q(x),t)\le \varepsilon.
\]
Using the previous Lipschitz bound for both $A=[N]_0$ and $A=S_0$,
\begin{align*}
\Delta_S(q(x))
&= M_{[N]_0}(q(x)) - M_{S_0}(q(x))\\
&\le \bigl(M_{[N]_0}(t)+\varepsilon\bigr) - \bigl(M_{S_0}(t)-\varepsilon\bigr)\\
&= \Delta_S(t)+2\varepsilon\\
&\le \max_{s\in T}\Delta_S(s)+2\varepsilon.
\end{align*}

By Proposition~\ref{prop:gap-ptwise} (pointwise value gap), for every $x\in X$,
\[
\Phi_{[N]}(x)-\Phi_S(x) = r\,\Delta_S(q(x)).
\]
Therefore,
\[
\Phi_{[N]}(x)-\Phi_S(x)
\le
r\left(\max_{s\in T}\Delta_S(s)+2\varepsilon\right).
\]
This bound holds for every $x\in X$. In particular, if $x_S^\star\in\arg\min_x\Phi_S(x)$, then
\begin{align*}
\min_x \Phi_{[N]}(x)-\min_x \Phi_S(x)
&\le \Phi_{[N]}(x_S^\star)-\Phi_S(x_S^\star)\\
&\le r\left(\max_{s\in T}\Delta_S(s)+2\varepsilon\right),
\end{align*}
which proves the theorem.
\end{proof}

\subsection{Proof of Theorem~\ref{thm:oos-feas}}
\label{app:oos-proof}

\begin{proof}
Let
\[
M_B := \sum_{k=0}^{B}\binom{N}{k}.
\]
Fix a subset $S\subseteq[N]$ with $|S|\le B$. Let $F_S$ denote the population CDF of $Z_S(U)$, and let $\widehat F_S$ be its empirical CDF based on $\{U_j\}_{j=1}^m$.

By the Dvoretzky--Kiefer--Wolfowitz (DKW) inequality, for each fixed $S$,
\[
\mathbb{P}\!\left(\sup_{z\in\R}\bigl|\widehat F_S(z)-F_S(z)\bigr|>\eta\right)
\le 2e^{-2m\eta^2}
= \frac{\delta}{M_B}.
\]
Define the event
\[
\E_S := \left\{\sup_{z\in\R}\bigl|\widehat F_S(z)-F_S(z)\bigr|\le \eta\right\}.
\]
Then $\mathbb{P}(\E_S)\ge 1-\delta/M_B$.

Now take a union bound over all subsets $S\subseteq[N]$ with $|S|\le B$. The event
\[
\E := \bigcap_{\substack{S\subseteq[N]\\|S|\le B}}\E_S
\]
satisfies
\[
\mathbb{P}(\E)\ge 1-\sum_{\substack{S\subseteq[N]\\|S|\le B}}\frac{\delta}{M_B}\ge 1-\delta.
\]

We now prove the claimed guarantee on the event $\E$.
Fix any $S\subseteq[N]$ with $|S|\le B$, and set
\[
r:=\widehat r(S)=\widehat Q_{\,1-\alpha+\eta}(Z_S).
\]
By definition of the empirical quantile,
\[
\widehat F_S(r)\ge 1-\alpha+\eta.
\]
Since $\E$ implies $\widehat F_S(z)-F_S(z)\le \eta$ for all $z$, we obtain
\[
F_S(r)\ge \widehat F_S(r)-\eta \ge 1-\alpha.
\]
Therefore,
\[
\mathbb{P}\!\left(Z_S(U)>r\right)=1-F_S(r)\le \alpha.
\]
Because this argument is valid for every $S$ with $|S|\le B$, the guarantee holds simultaneously for all such subsets on the event $\E$.

Finally, the subset returned by Algorithm~\ref{alg:greedy} is data-dependent, but it belongs to the same finite family $\{S\subseteq[N]: |S|\le B\}$. Hence the same simultaneous guarantee applies to the algorithm output.
\end{proof}

\subsection{Proof of Proposition~\ref{prop:submod}}
\label{app:prop4_1}

\begin{proof}
Fix $s\in T$ and define
\[
g_s(S):=\max_{i\in S_0}\langle d_i,s\rangle.
\]
Monotonicity is immediate: if $A\subseteq B$, then $A_0\subseteq B_0$, hence
\[
g_s(A)=\max_{i\in A_0}\langle d_i,s\rangle
\le
\max_{i\in B_0}\langle d_i,s\rangle
=g_s(B).
\]

To show submodularity, let $A\subseteq B\subseteq [N]$ and $j\in [N]\setminus B$. Write $a_j:=\langle d_j,s\rangle$. Then
\[
g_s(A\cup\{j\})-g_s(A)=\max\{g_s(A),a_j\}-g_s(A)=\max\{0,a_j-g_s(A)\},
\]
and similarly
\[
g_s(B\cup\{j\})-g_s(B)=\max\{0,a_j-g_s(B)\}.
\]
Since $g_s(A)\le g_s(B)$, we get
\[
\max\{0,a_j-g_s(A)\}\ge \max\{0,a_j-g_s(B)\},
\]
which is the diminishing-returns property. Thus $g_s$ is submodular.

Finally,
\[
F(S;T)=\frac1{|T|}\sum_{s\in T} g_s(S)
\]
is a nonnegative linear combination of monotone submodular functions, hence is itself monotone and submodular.
\end{proof}


\subsection{Proof of Theorem~\ref{thm:np-hard}}
\label{app:theom4_3}

\begin{proof}
We reduce from the classical \emph{Max-Coverage} problem.

An instance of Max-Coverage consists of:
\begin{itemize}
    \item a ground set $U=\{1,\dots,m\}$,
    \item subsets $A_1,\dots,A_N\subseteq U$,
    \item a budget $B$,
\end{itemize}
and asks for a set of indices $S\subseteq[N]$ with $|S|\le B$ maximizing
\[
\left|\bigcup_{i\in S}A_i\right|.
\]

Given such an instance, construct an instance of the directional coverage problem as follows:
\begin{itemize}
    \item Set the ambient dimension to $p=m$.
    \item For each $i\in[N]$, define $d_i\in\{0,1\}^m$ as the indicator vector of $A_i$:
    \[
    (d_i)_j :=
    \begin{cases}
    1, & j\in A_i,\\
    0, & j\notin A_i.
    \end{cases}
    \]
    \item Let
    \[
    T:=\{e_1,\dots,e_m\}\subseteq\R^m,
    \]
    where $e_j$ is the $j$-th canonical basis vector.
\end{itemize}
This reduction is computable in polynomial time, and all coordinates of $d_i$ and $T$ are in $\{0,1\}$.

Now fix any $S\subseteq[N]$. For any $s=e_j\in T$,
\[
\max_{i\in S_0}\ip{d_i}{e_j}
=
\max_{i\in S_0}(d_i)_j.
\]
Since each $(d_i)_j\in\{0,1\}$ and $d_0=0$, we have
\[
\max_{i\in S_0}(d_i)_j =
\begin{cases}
1, & \text{if there exists }i\in S\text{ such that }j\in A_i,\\
0, & \text{otherwise}.
\end{cases}
\]
Therefore,
\[
\sum_{s\in T}\max_{i\in S_0}\ip{d_i}{s}
=
\sum_{j=1}^m \mathbf{1}\!\left\{j\in \bigcup_{i\in S}A_i\right\}
=
\left|\bigcup_{i\in S}A_i\right|.
\]
Dividing by $|T|=m$ gives
\[
F(S;T)=\frac1m\left|\bigcup_{i\in S}A_i\right|.
\]

Thus, maximizing $F(S;T)$ subject to $|S|\le B$ is exactly equivalent (up to the constant factor $1/m$) to solving the Max-Coverage instance. Since Max-Coverage is NP-hard, the directional coverage design problem is NP-hard as well.
\end{proof}


\subsection{Proof of Theorem~\ref{thm:hardness-approx}}
\label{app:theom4_4}

\begin{proof}
We use the same reduction from Max-Coverage as in the proof of Theorem~\ref{thm:np-hard}.

Let
\[
\operatorname{cov}(S):=\left|\bigcup_{i\in S}A_i\right|
\]
denote the Max-Coverage objective value. For the constructed instance of $(P_{\mathrm{cov}})$, we proved that for every $S\subseteq[N]$,
\[
F(S;T)=\frac{1}{m}\operatorname{cov}(S).
\]
Hence the reduction preserves approximation ratios exactly:
for any feasible $S$ and any optimal solution $S^\star$,
\[
\frac{F(S;T)}{F(S^\star;T)}
=
\frac{\operatorname{cov}(S)}{\operatorname{cov}(S^\star)}.
\]

Suppose, for contradiction, that there exists a polynomial-time algorithm for $(P_{\mathrm{cov}})$ achieving approximation factor $(1-1/e+\varepsilon)$ for some $\varepsilon>0$. Applying it to the reduced instance and using the identity above would produce, in polynomial time, a solution $S$ to Max-Coverage such that
\[
\operatorname{cov}(S)\ge (1-1/e+\varepsilon)\,\operatorname{cov}(S^\star).
\]
This contradicts the classical hardness-of-approximation barrier for Max-Coverage: for every $\varepsilon > 0$, no polynomial-time algorithm can achieve an approximation factor of $1 - 1/e + \varepsilon$ unless $P = NP$ (see, e.g., \cite{feige1998threshold}).

Therefore, no polynomial-time algorithm can approximate $(P_{\mathrm{cov}})$ within a factor strictly better than $1-1/e$ unless $P=NP$.
\end{proof}


\subsection{Proof of Theorem~\ref{thm:greedy}}
\label{app:thm4_5}

\begin{proof}
By Proposition~\ref{prop:submod}, the set function $F(\cdot;T)$ is monotone and submodular. Also, since $d_0=0$ and $S_0=\{0\}$ when $S=\emptyset$, we have
\[
F(\emptyset;T)=\frac1{|T|}\sum_{s\in T}\max_{i\in \{0\}}\langle d_i,s\rangle=0.
\]
Therefore, the classical Nemhauser--Wolsey--Fisher theorem for monotone submodular maximization under a cardinality constraint implies that the greedy algorithm returns a set $S_{\mathrm{greedy}}$ with
\[
F(S_{\mathrm{greedy}};T)\ge (1-1/e)\max_{|S|\le B}F(S;T).
\]
\end{proof}


\subsection{Proof of Corollary~\ref{cor:greedy-optimal}}
\label{app:cor4_6}

\begin{proof}
Immediate from Theorem~\ref{thm:hardness-approx} and Theorem~\ref{thm:greedy}.
\end{proof}


\section{AVERAGE-TO-WORST BRIDGE ON THE FINITE PROBE SET}
\label{app:average_worst}

This section formalizes a sufficient condition under which improving the surrogate objective
$F(S;T)$ also reduces the worst-case deficit $\max_{s \in T}\Delta_S(s)$ on the same probe set.

Recall the notation from Appendix C.3. For any index set $A \subseteq [N]_0$, define
\[
M_A(s) := \max_{i \in A}\langle d_i,s\rangle,
\qquad
\Delta_S(s) := M_{[N]_0}(s)-M_{S_0}(s).
\]
For a fixed nonempty finite set $T \subset \mathbb{R}^p$ and a subset $S \subseteq [N]$, define the average deficit
\[
\bar{\Delta}_S(T) := \frac{1}{|T|}\sum_{s \in T}\Delta_S(s).
\]
Using the definition of $F(\cdot;T)$, the surrogate gap satisfies the exact identity
\[
F([N];T)-F(S;T)=\bar{\Delta}_S(T).
\]

To convert this average quantity into a worst-case bound, define the local occupancy of the probe set under $d_D$:
\[
\mu_\eta(T) := \min_{s \in T}\left|\left\{t \in T : d_D(s,t)\le \eta\right\}\right|,
\qquad \eta \ge 0.
\]
Since $T\neq \varnothing$ and $d_D(s,s)=0$, we have $\mu_\eta(T)\ge 1$ for all $\eta\ge 0$.

\begin{proposition}[Average-to-worst bridge on $T$]
\label{prop:avg_to_worst_bridge}
Let $T \subset \mathbb{R}^p$ be nonempty and finite. Then, for any $S \subseteq [N]$ and any $\eta \ge 0$,
\[
\max_{s \in T}\Delta_S(s)
\;\le\;
\frac{|T|}{\mu_\eta(T)}\,\bar{\Delta}_S(T)+2\eta
\;=\;
\frac{|T|}{\mu_\eta(T)}\bigl(F([N];T)-F(S;T)\bigr)+2\eta.
\]
Equivalently,
\[
\max_{s \in T}\Delta_S(s)
\;\le\;
\inf_{\eta \ge 0}
\left\{
\frac{|T|}{\mu_\eta(T)}\,\bar{\Delta}_S(T)+2\eta
\right\}.
\]
\end{proposition}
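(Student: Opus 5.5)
The plan is to combine the Lipschitz property of the deficit from the proof of Theorem~\ref{thm:enet-bridge} with a Markov-type counting argument on the probe set. Recall from Appendix~\ref{app:theom3_1} that for every index set $A\subseteq[N]_0$ the map $M_A$ is $1$-Lipschitz with respect to $d_D$. Since $\Delta_S=M_{[N]_0}-M_{S_0}$, it follows that
\[
|\Delta_S(s)-\Delta_S(t)|\le 2\,d_D(s,t)\qquad\text{for all } s,t.
\]
This bound holds for every pair of points, not only for points of a net, and it is the only geometric input the argument needs.

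\textbf{Step 1 (local lower bound).} Fix $\eta\ge 0$ and let $s^\ast\in\arg\max_{s\in T}\Delta_S(s)$, which exists because $T$ is finite. Let $N_\eta(s^\ast):=\{t\in T: d_D(s^\ast,t)\le\eta\}$. By the Lipschitz bound, every $t\in N_\eta(s^\ast)$ satisfies
\[
\Delta_S(t)\ge \Delta_S(s^\ast)-2\eta .
\]

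\textbf{Step 2 (averaging).} Each $\Delta_S(t)$ is nonnegative, since $S_0\subseteq[N]_0$. We can therefore drop the terms outside the neighborhood:
\[
|T|\,\bar\Delta_S(T)=\sum_{t\in T}\Delta_S(t)\ge \sum_{t\in N_\eta(s^\ast)}\Delta_S(t)\ge |N_\eta(s^\ast)|\,\bigl(\Delta_S(s^\ast)-2\eta\bigr).
\]
A sign issue arises here, because the last inequality uses the lower bound from Step 1 multiplied by a count. If $\Delta_S(s^\ast)-2\eta\le 0$, the claimed inequality is trivial, since its right-hand side is at least $2\eta$. Otherwise that factor is positive, and we may replace $|N_\eta(s^\ast)|$ by the smaller quantity $\mu_\eta(T)\ge 1$ from the definition of occupancy. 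Dividing by $\mu_\eta(T)$ gives
\[
\Delta_S(s^\ast)\le \frac{|T|}{\mu_\eta(T)}\,\bar\Delta_S(T)+2\eta .
\]

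\textbf{Step 3 (identification and infimum).} Substitute the identity $F([N];T)-F(S;T)=\bar\Delta_S(T)$, which holds because $F$ averages $M_{S_0}$ over $T$. The bound holds for every $\eta\ge0$, so taking the infimum over $\eta$ yields the equivalent form.

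I expect no real obstacle. The only delicate points are the factor $2$ in the Lipschitz constant of $\Delta_S$ and the case split in Step 2 that justifies replacing $|N_\eta(s^\ast)|$ by its minimum over $T$.
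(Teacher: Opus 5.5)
Your proof is correct and follows the paper's argument essentially step for step: the $2$-Lipschitz bound on $\Delta_S$ in $d_D$, the $\eta$-neighborhood of a maximizer $s^\star$, dropping the nonnegative terms outside that neighborhood, replacing $|\mathcal{N}_\eta(s^\star)|$ by $\mu_\eta(T)$, and identifying $\bar{\Delta}_S(T)$ with $F([N];T)-F(S;T)$. Your case split on the sign of $\Delta_S(s^\star)-2\eta$ is a small improvement: the paper makes the replacement by $\mu_\eta(T)$ without noting that it is valid only when that factor is nonnegative, and the remaining case is trivial, as you observe.
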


\begin{proof}
Appendix C.3 shows that, for any index set $A \subseteq [N]_0$, the map $M_A(\cdot)$ is $1$-Lipschitz with respect to $d_D$:
\[
|M_A(s)-M_A(s')| \le d_D(s,s').
\]
Therefore, since $\Delta_S = M_{[N]_0}-M_{S_0}$,
\[
|\Delta_S(s)-\Delta_S(s')|
\le
|M_{[N]_0}(s)-M_{[N]_0}(s')|
+
|M_{S_0}(s)-M_{S_0}(s')|
\le
2\,d_D(s,s').
\]
Hence $\Delta_S(\cdot)$ is $2$-Lipschitz in $d_D$.

Let $s^\star \in \arg\max_{s \in T}\Delta_S(s)$ and define the $d_D$-ball on the probe set
\[
\mathcal{N}_\eta(s^\star) := \{t \in T : d_D(t,s^\star)\le \eta\}.
\]
For any $t \in \mathcal{N}_\eta(s^\star)$, the $2$-Lipschitz property gives
\[
\Delta_S(t)\ge \Delta_S(s^\star)-2\eta.
\]
Summing over $\mathcal{N}_\eta(s^\star)$ and using nonnegativity of $\Delta_S$ (since $S_0\subseteq [N]_0$ implies $M_{[N]_0}\ge M_{S_0}$ pointwise),
\[
|T|\,\bar{\Delta}_S(T)
=
\sum_{t \in T}\Delta_S(t)
\ge
\sum_{t \in \mathcal{N}_\eta(s^\star)}\Delta_S(t)
\ge
|\mathcal{N}_\eta(s^\star)|\bigl(\Delta_S(s^\star)-2\eta\bigr).
\]
By definition of $\mu_\eta(T)$, $|\mathcal{N}_\eta(s^\star)| \ge \mu_\eta(T)$, so
\[
|T|\,\bar{\Delta}_S(T)
\ge
\mu_\eta(T)\bigl(\max_{s \in T}\Delta_S(s)-2\eta\bigr).
\]
Rearranging yields
\[
\max_{s \in T}\Delta_S(s)
\le
\frac{|T|}{\mu_\eta(T)}\,\bar{\Delta}_S(T)+2\eta.
\]

Finally, identify $\bar{\Delta}_S(T)$ with the surrogate gap:
\[
\bar{\Delta}_S(T)
=
\frac{1}{|T|}\sum_{s \in T}\bigl(M_{[N]_0}(s)-M_{S_0}(s)\bigr)
=
F([N];T)-F(S;T).
\]
This proves the first display. The infimum form follows immediately by minimizing the right-hand side over $\eta\ge 0$.
\end{proof}

\begin{corollary}[Robust value-gap bound through the surrogate gap]
\label{cor:surrogate_to_value_gap}
Under the assumptions of Theorem~3.1, let $T$ be an $\varepsilon$-net of $q(X)$ in $d_D$. Then, for any $S \subseteq [N]$ and any $\eta \ge 0$,
\[
\min_x \Phi[N](x)-\min_x \Phi_S(x)
\;\le\;
r\left(
\frac{|T|}{\mu_\eta(T)}\bigl(F([N];T)-F(S;T)\bigr)
+
2\eta + 2\varepsilon
\right).
\]
Equivalently,
\[
\min_x \Phi[N](x)-\min_x \Phi_S(x)
\;\le\;
r\left(
\inf_{\eta\ge 0}
\left\{
\frac{|T|}{\mu_\eta(T)}\bigl(F([N];T)-F(S;T)\bigr)+2\eta
\right\}
+2\varepsilon
\right).
\]
\end{corollary}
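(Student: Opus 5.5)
The plan is to chain two results already established: the $\varepsilon$-net bridge (Theorem~\ref{thm:enet-bridge}) and the average-to-worst bridge (Proposition~\ref{prop:avg_to_worst_bridge}). Since $T$ is an $\varepsilon$-net of $q(X)$ in $d_{\mathcal D}$ and the hypotheses of Theorem~\ref{thm:enet-bridge} hold, that theorem gives, for any $S\subseteq[N]$,
\[
\min_x \Phi_{[N]}(x)-\min_x \Phi_S(x)\ \le\ r\Big(\max_{s\in T}\Delta_S(s)+2\varepsilon\Big).
\]
The remaining task is to bound the worst-case deficit $\max_{s\in T}\Delta_S(s)$ by a quantity involving the surrogate gap.

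Next, fix any $\eta\ge 0$. The probe set $T$ is nonempty and finite, because it is a finite $\varepsilon$-net of the nonempty set $q(X)$. Proposition~\ref{prop:avg_to_worst_bridge} then applies and yields
\[
\max_{s\in T}\Delta_S(s)\le \frac{|T|}{\mu_\eta(T)}\bigl(F([N];T)-F(S;T)\bigr)+2\eta .
\]
Substituting this bound into the previous display and using $r>0$ to preserve the inequality gives the first claimed bound. This bound holds for every $\eta\ge 0$.

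For the infimum form, note that the left-hand side does not depend on $\eta$. We may therefore take the infimum over $\eta\ge 0$ on the right. The terms $r$ and $2r\varepsilon$ do not depend on $\eta$, so the infimum passes inside to the bracketed expression. This infimum is finite, since the expression is bounded below by $0$: we have $\mu_\eta(T)\ge 1$, the surrogate gap $F([N];T)-F(S;T)=\bar\Delta_S(T)$ is nonnegative, and $\eta\ge 0$.

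No step is a real obstacle. The only points needing care are, first, that $T$ is nonempty, so that $\mu_\eta(T)$ is well defined and at least $1$, and second, that the constant in front of $\eta$ is correct. The constant is $2$ because $\Delta_S$ is $2$-Lipschitz in $d_{\mathcal D}$, and this is already accounted for in Proposition~\ref{prop:avg_to_worst_bridge}.
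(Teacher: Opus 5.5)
Your proposal is correct and matches the paper's proof: apply Theorem~\ref{thm:enet-bridge}, then substitute the bound on $\max_{s\in T}\Delta_S(s)$ from Proposition~\ref{prop:avg_to_worst_bridge}. Your added checks, namely that $T$ is nonempty, that $\mu_\eta(T)\ge 1$, and that the infimum passes through the factor $r>0$, are valid details the paper leaves implicit.
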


\begin{proof}
Theorem~3.1 gives
\[
\min_x \Phi[N](x)-\min_x \Phi_S(x)
\le
r\bigl(\max_{s \in T}\Delta_S(s)+2\varepsilon\bigr).
\]
Apply Proposition~\ref{prop:avg_to_worst_bridge} to bound $\max_{s \in T}\Delta_S(s)$ and substitute.
\end{proof}

\paragraph{Practical interpretation.}
The factor $|T|/\mu_\eta(T)$ measures how densely the probe set covers itself under $d_D$ at resolution $\eta$. When $T$ is well distributed in the relevant directions, this factor is moderate, and improving the surrogate objective $F(S;T)$ reduces the worst-case certificate term on $T$, up to the additive resolution term $2\eta$. Conversely, if $T$ is highly nonuniform (small local occupancy near some directions), then average coverage can be less informative about the worst-case deficit; the bound makes this dependence explicit.

%% file: content/0_appendix_theo_2.tex
\section{Directional relevance in the robust LP template}
\label{sec:dir-rel-lp}

We first recall a minimal linear-programming instantiation of the robust template, and then use LP duality to isolate the directions of uncertainty that are \emph{value-relevant} for the full directional model. This yields a dual-based notion of relevance that will serve as a structural benchmark for any restricted family $U_S(r)$.

\paragraph{Minimal LP template.}
Let $X=\{x:Ax\le b\}$ and $f(x,u)=c(u)^\top x$ with $c(u)=c_0+Mu$. Then
\[
\Phi_S(x)=c_0^\top x + r\max_{i\in S_0}\ip{d_i}{M^\top x},
\]
and $\min_x \Phi_S(x)$ is an LP after introducing a variable to linearize the maximum. Pessimization over $[N]$ reduces to evaluating
\[
i^\star\in\argmax_{i\in [N]_0}\ip{d_i}{M^\top x}.
\]

\subsection{Setup and notation}
Assume a compact polyhedral feasible region
\[
X := \{x\in\mathbb{R}^n : Ax \le b\},
\]
and an affine objective coefficient map
\[
c(u) = c_0 + Mu,
\]
so that the nominal objective is $f(x,u)=c(u)^\top x$. The full directional robust objective is
\[
\Phi_{[N]}(x)=c_0^\top x + r\max_{i\in [N]_0}\ip{d_i}{M^\top x}.
\]

\subsection{Robust LP template and dual}

We adopt the following standing setup.

\begin{assumption}[Robust LP template]
\label{asp:lp-template}
Let $X:=\{x\in\R^n:Ax\le b\}$ be a nonempty, bounded polyhedron, with
$A\in\R^{m\times n}$ and $b\in\R^m$.
The uncertain cost is
\[
c(u)\ :=\ c_0+Mu,\qquad c_0\in\R^n,\ M\in\R^{n\times p},
\]
and the directional dictionary is
$\mathcal D=\{d_i\}_{i=1}^N\subset\R^p$, with $d_0:=0$ and
$[N]_0:=\{0,1,\dots,N\}$.
For $S\subseteq[N]$ and $r>0$,
\[
U_S(r)\ :=\ \Big\{u=\textstyle\sum_{i\in S}\alpha_i d_i:\ \alpha_i\ge 0,\ \sum_{i\in S}\alpha_i\le r\Big\}.
\]
The \emph{full} model corresponds to $S=[N]$.
\end{assumption}

The robust value of the full model is
\[
z_{[N]}^\star
\ :=\
\min_{x\in X}\ \sup_{u\in U_{[N]}(r)} (c_0+Mu)^\top x.
\]
Using the support function of $U_{[N]}(r)$ and introducing an epigraph variable
$t\in\R$ we obtain the equivalent LP
\begin{equation}
\label{eq:full-primal}
\begin{aligned}
\text{(P$[N]$)}\qquad
z_{[N]}^\star
\ =\ \min_{x,t}\quad & c_0^\top x + r t\\
\text{s.t.}\quad & Ax \le b,\\
& (Md_i)^\top x \le t,\quad \forall i\in[N]_0.
\end{aligned}
\end{equation}

We now write its dual in a form that exposes the role of the dictionary.

\begin{proposition}[Dual of the full robust LP]
\label{prop:full-dual}
Under Assumption~\ref{asp:lp-template}, the dual of \emph{(P$[N]$)} is
\begin{equation}
\label{eq:full-dual}
\begin{aligned}
\text{(D$[N]$)}\qquad
z_{[N]}^\star
\ =\ \max_{\lambda,\mu}\quad & -\,b^\top \lambda\\
\text{s.t.}\quad
& A^\top \lambda\ =\ -\,c_0 - M\Big(\textstyle\sum_{i\in[N]_0}\mu_i d_i\Big),\\
& \sum_{i\in[N]_0} \mu_i\ =\ r,\\
& \lambda\ \ge\ 0,\quad \mu_i\ \ge\ 0,\ \forall i\in[N]_0.
\end{aligned}
\end{equation}
Strong duality holds: \emph{(P$[N]$)} and \emph{(D$[N]$)} have the same finite
optimal value $z_{[N]}^\star$.
\end{proposition}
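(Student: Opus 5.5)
The plan is to derive (D$[N]$) by Lagrangian duality applied to the epigraph LP (P$[N]$), and then to obtain strong duality from LP duality theory using feasibility and boundedness of the primal.

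\textbf{Step 1 (Lagrangian and dual function).} Attach multipliers $\lambda\in\R^m_{\ge 0}$ to $Ax\le b$ and $\mu_i\ge 0$ to $(Md_i)^\top x\le t$ for $i\in[N]_0$. The Lagrangian is
\[
L(x,t,\lambda,\mu)=c_0^\top x+rt+\lambda^\top(Ax-b)+\sum_{i\in[N]_0}\mu_i\big((Md_i)^\top x-t\big).
\]
Regrouping gives $x^\top\big(c_0+A^\top\lambda+M\sum_i\mu_i d_i\big)+t\big(r-\sum_i\mu_i\big)-b^\top\lambda$. The variables $x$ and $t$ are free, so the infimum over them is $-\infty$ unless both coefficients vanish. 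This yields exactly the constraints $A^\top\lambda=-c_0-M\sum_i\mu_i d_i$ and $\sum_i\mu_i=r$. When these hold, the dual function equals $-b^\top\lambda$, which recovers (D$[N]$). Equivalently, one can write the standard-form dual $\max\{-b^\top\lambda\}$ directly from the constraint matrix $\begin{pmatrix}A&0\\ (Md_i)^\top&-1\end{pmatrix}$ with cost $(c_0,r)$. Weak duality $-b^\top\lambda\le c_0^\top x+rt$ then follows from the usual chain of inequalities.

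\textbf{Step 2 (primal feasibility and finiteness).} $X$ is nonempty, so pick any $\bar x\in X$ and set $\bar t:=\max_{i\in[N]_0}(Md_i)^\top\bar x$. This max is finite because the index set is finite, so (P$[N]$) is feasible. For boundedness, note that at any feasible point $t\ge (Md_0)^\top x=0$, because $d_0=0$, and $r>0$. Hence $c_0^\top x+rt\ge c_0^\top x\ge\min_{x\in X}c_0^\top x>-\infty$, since $X$ is a bounded, hence compact, polyhedron. A feasible LP with objective bounded below attains a finite optimum. That optimum equals $\min_{x\in X}\Phi_{[N]}(x)=z^\star_{[N]}$, because for fixed $x$ the optimal $t$ is $\max_i (Md_i)^\top x=\max_i\ip{d_i}{M^\top x}$, which is the support-function identity \eqref{eq:support}.

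\textbf{Step 3 (strong duality).} By the strong duality theorem of linear programming, a feasible primal with finite optimum implies that the dual is feasible, attains its optimum, and has the same optimal value $z^\star_{[N]}$.

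The only delicate point I anticipate is the sign and transposition bookkeeping in Step 1, namely getting $A^\top\lambda=-c_0-M\sum\mu_id_i$ with the correct signs and the equality $\sum\mu_i=r$ rather than an inequality. The equality arises because $t$ is free, so its coefficient must vanish. Note also that the $d_0=0$ atom gives the slack $\mu_0$, which makes $\sum_{i\in[N]}\mu_i\le r$ implicit.
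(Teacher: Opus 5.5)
Your proposal is correct and follows the paper's proof. It attaches $\lambda\ge 0$ and $\mu_i\ge 0$, forms the same Lagrangian, forces the coefficients of the free variables $x$ and $t$ to vanish to get the two equality constraints, and then invokes LP strong duality. Your Step 2 adds detail the paper only asserts in its appeal to boundedness and feasibility of $X$: because $t$ is unbounded above, compactness of $X$ alone does not bound the LP from below, and your bound $t\ge (Md_0)^\top x=0$ together with $r>0$ supplies that lower bound.
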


\begin{proof}
Assign dual variables $\lambda\in\R^m_+$ to $Ax\le b$ and
$\mu_i\ge 0$ to $(Md_i)^\top x - t \le 0$ for $i\in[N]_0$.
The Lagrangian of \eqref{eq:full-primal} is
\[
L(x,t;\lambda,\mu)
=\big(c_0 + A^\top\lambda + M\textstyle\sum_{i\in[N]_0}\mu_i d_i\big)^\top x
+\big(r-\sum_{i\in[N]_0}\mu_i\big)t - b^\top\lambda.
\]
For $\inf_{x,t} L(x,t;\lambda,\mu)$ to be finite we need both coefficients of
$x$ and $t$ to vanish. This yields the two equalities in
\eqref{eq:full-dual}; under them, $L(x,t;\lambda,\mu)\equiv -b^\top\lambda$.
Maximizing over $(\lambda,\mu)$ subject to feasibility gives \emph{(D$[N]$)}.
Strong duality follows from standard LP duality under boundedness and
feasibility of $X$.
\end{proof}

\subsection{Value-relevant directions and the relevant face}

Optimal dual solutions reveal which dictionary atoms are actually used in the
worst-case cost at the robust optimum.

\begin{definition}[Value-relevant indices and face]
Let $(x^\star,t^\star)$ and $(\lambda^\star,\mu^\star)$ be optimal solutions of
\emph{(P$[N]$)} and \emph{(D$[N]$)}, respectively. Define the set of
\emph{value-relevant indices}
\[
I_{\mathrm{rel}}\ :=\ \{i\in[N] : \mu_i^\star > 0\},
\]
and the corresponding \emph{relevant face}
\[
C_{\mathrm{rel}}\ :=\ \conv\!\big(\{d_i : i\in I_{\mathrm{rel}}\}\cup\{0\}\big)
\ \subseteq\ \conv\!\big(\{d_i : i\in[N]\}\cup\{0\}\big).
\]
We also define the associated worst-case direction
\[
u^\star\ :=\ \sum_{i\in[N]_0}\mu_i^\star d_i\in U_{[N]}(r),
\qquad
v^\star\ :=\ \frac{1}{r}\,u^\star\in C_{\mathrm{rel}}.
\]
\end{definition}

By complementary slackness, every index $i\in I_{\mathrm{rel}}$ corresponds to a
binding constraint $(Md_i)^\top x^\star=t^\star$ in \emph{(P$[N]$)}. The vector
$u^\star$ is a worst-case perturbation at $x^\star$, and $v^\star$ lies in an
exposed face of $\conv(\mathcal D\cup\{0\})$ determined by $M^\top x^\star$.

\subsection{Directional sufficiency via the dual}

For $S\subseteq[N]$, define the restricted uncertainty set $U_S(r)$ as in
Assumption~\ref{asp:lp-template}, and denote by $z_S^\star$ the corresponding
robust value:
\[
z_S^\star\ :=\ \min_{x\in X}\ \sup_{u\in U_S(r)} (c_0+Mu)^\top x.
\]

The following theorem gives a dual-based notion of \emph{directional sufficiency}
for the full robust value.

\begin{theorem}[Global directional sufficiency via the dual]
\label{thm:dir-suff-global}
Adopt Assumption~\ref{asp:lp-template} and let
$(\lambda^\star,\mu^\star)$ be any optimal solution of \emph{(D$[N]$)},
with associated value-relevant set $I_{\mathrm{rel}}$ and face
$C_{\mathrm{rel}}=\conv(\{d_i : i\in I_{\mathrm{rel}}\}\cup\{0\})$.
Let $S\subseteq[N]$ and suppose that
\begin{equation}
\label{eq:crel-in-S-global}
C_{\mathrm{rel}}\ \subseteq\ \conv\{d_i : i\in S_0\}.
\end{equation}
Then the robust values of the full and restricted models coincide:
\[
z_S^\star\ =\ z_{[N]}^\star.
\]
\end{theorem}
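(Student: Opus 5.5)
The plan is to sandwich $z_S^\star$ between $z_{[N]}^\star$ from both sides. The easy inequality $z_S^\star\le z_{[N]}^\star$ is already available: since $S_0\subseteq[N]_0$, Proposition~\ref{prop:gap-ptwise} gives $\Phi_S(x)\le\Phi_{[N]}(x)$ for every $x\in X$, and taking minima yields the claim. The substantive direction is $z_S^\star\ge z_{[N]}^\star$. For this I will build a feasible solution of the dual of the restricted LP (P$S$), which is (D$S$), whose value equals $-b^\top\lambda^\star=z_{[N]}^\star$. Weak duality then gives the bound.

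The restricted dual is obtained exactly as in Proposition~\ref{prop:full-dual}, with $[N]_0$ replaced by $S_0$. It reads: maximize $-b^\top\lambda$ subject to
\[
A^\top\lambda=-c_0-M\Big(\textstyle\sum_{i\in S_0}\nu_i d_i\Big),\qquad \sum_{i\in S_0}\nu_i=r,\qquad \lambda\ge0,\ \nu\ge0 .
\]
Since $X$ is nonempty and bounded, strong duality holds here too. I only need weak duality, however: any feasible $(\lambda,\nu)$ satisfies $-b^\top\lambda\le z_S^\star$.

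The construction goes as follows.
\begin{itemize}
\item Keep $\lambda:=\lambda^\star$.
\item Set $v^\star:=\frac1r\sum_{i\in[N]_0}\mu_i^\star d_i$. Indices outside $I_{\mathrm{rel}}\cup\{0\}$ carry zero weight, and $d_0=0$, so $v^\star=\frac1r\sum_{i\in I_{\mathrm{rel}}}\mu_i^\star d_i$. The coefficients $\mu^\star_i/r$ for $i\in I_{\mathrm{rel}}$ are nonnegative, and the weight $\mu^\star_0/r$ goes to the atom $0$. Hence $v^\star\in C_{\mathrm{rel}}$.
\item By hypothesis~\eqref{eq:crel-in-S-global}, $v^\star\in\conv\{d_i:i\in S_0\}$. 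So there are weights $\theta_i\ge0$ for $i\in S_0$ with $\sum_{i\in S_0}\theta_i=1$ and $v^\star=\sum_{i\in S_0}\theta_i d_i$.
\item Define $\nu_i:=r\theta_i$. Then $\nu\ge0$, $\sum_{i\in S_0}\nu_i=r$, and $\sum_{i\in S_0}\nu_i d_i=u^\star=\sum_{i\in[N]_0}\mu_i^\star d_i$.
\end{itemize}
The equality constraint of (D$[N]$) depends on $\mu^\star$ only through $M u^\star$. Therefore $A^\top\lambda^\star=-c_0-M\sum_{i\in S_0}\nu_i d_i$, and $(\lambda^\star,\nu)$ is feasible for (D$S$) with objective value $-b^\top\lambda^\star=z_{[N]}^\star$. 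Weak duality gives $z_{[N]}^\star\le z_S^\star$, and combined with the first inequality this yields equality.

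The step I expect to need the most care is this convex-combination transfer. One must check that $v^\star$ really lies in $C_{\mathrm{rel}}$, including the contribution of the null atom and the normalization by $r$, so that the total mass $r$ is preserved exactly. Note that $\conv\{d_i:i\in S_0\}$ contains $0$ because $0\in S_0$. This is what lets the slack weight be absorbed by $\nu_0$.

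A secondary point is verifying that the epigraph LP (P$S$) equals $z_S^\star$. This holds because $\sup_{u\in U_S(r)}(c_0+Mu)^\top x=c_0^\top x+r\max_{i\in S_0}\langle d_i,M^\top x\rangle$ by the support formula~\eqref{eq:support}, and the dual derivation is then verbatim that of Proposition~\ref{prop:full-dual}.

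As an alternative primal check, if one prefers to avoid a second dualization, one can argue through the saddle-point inequality. For every $x\in X$, the equality constraint gives $c_0^\top x+(Mu^\star)^\top x=-(A^\top\lambda^\star)^\top x\ge -b^\top\lambda^\star$, using $\lambda^\star\ge0$ and $Ax\le b$. Since $u^\star\in U_S(r)$ by the construction above, $\sup_{u\in U_S(r)}(c_0+Mu)^\top x\ge -b^\top\lambda^\star$ for all $x\in X$. Minimizing over $x$ gives $z_S^\star\ge z_{[N]}^\star$ directly. This is the same argument in primal form.
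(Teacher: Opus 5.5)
Your proposal is correct and follows essentially the same argument as the paper. You write $v^\star=u^\star/r\in C_{\mathrm{rel}}$ as a convex combination over $S_0$ and rescale by $r$, which gives a feasible point $(\lambda^\star,\nu)$ of the restricted dual with value $-b^\top\lambda^\star=z_{[N]}^\star$; you then close with $z_S^\star\le z_{[N]}^\star$ via Proposition~\ref{prop:gap-ptwise}, which is equivalent to the paper's inclusion $U_S(r)\subseteq U_{[N]}(r)$. Two small differences are both valid: you rightly note that weak duality for (P$[S]$)--(D$[S]$) suffices where the paper cites strong duality, and your primal saddle-point variant gives the same bound without dualizing the restricted problem.
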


\begin{proof}
Let $u^\star:=\sum_{i\in[N]_0}\mu_i^\star d_i\in U_{[N]}(r)$ and
$v^\star:=u^\star/r\in C_{\mathrm{rel}}$.
By \eqref{eq:crel-in-S-global}, there exist coefficients
$\{\alpha_j\}_{j\in S_0}$ with $\alpha_j\ge 0$ and
$\sum_{j\in S_0}\alpha_j=1$ such that
\[
v^\star=\sum_{j\in S_0}\alpha_j d_j,
\qquad\text{hence}\qquad
u^\star
=\ r v^\star
=\ \sum_{j\in S_0}(r\alpha_j)d_j.
\]
Define coefficients $\widetilde\mu_j:=r\alpha_j$ for $j\in S_0$. Then
$\widetilde\mu_j\ge 0$ and $\sum_{j\in S_0}\widetilde\mu_j=r$, and
\[
c_0 + M\sum_{j\in S_0}\widetilde\mu_j d_j
=\ c_0 + Mu^\star
=\ c_0 + M\sum_{i\in[N]_0}\mu_i^\star d_i
=\ -A^\top\lambda^\star,
\]
where the last equality uses feasibility of $(\lambda^\star,\mu^\star)$ in
\emph{(D$[N]$)}.
Thus $(\lambda^\star,\widetilde\mu)$ is feasible for the dual of the
restricted model \emph{(D$[S]$)}, with objective value
$-b^\top\lambda^\star=z_{[N]}^\star$.
By strong duality for \emph{(P$[S]$)}--\emph{(D$[S]$)},
\[
z_S^\star
=\max\{-b^\top\lambda:\ (\lambda,\mu)\ \text{feasible in \emph{(D$[S]$)}}\}
\ \ge\ -b^\top\lambda^\star
\ =\ z_{[N]}^\star.
\]
On the other hand, $U_S(r)\subseteq U_{[N]}(r)$ implies monotonicity of the
robust value in the uncertainty set, whence $z_S^\star\le z_{[N]}^\star$.
Combining the two inequalities yields $z_S^\star=z_{[N]}^\star$.
\end{proof}

\subsection{Instance-wise no-free-lunch for insufficient subsets}

We now show that if a subset $S$ fails to represent the relevant convex
geometry of the dictionary, then there is no uniform guarantee relating the
restricted and full robust values: one can construct instances for which the
value gap is arbitrarily large.

\begin{theorem}[Instance-wise no-free-lunch for insufficient subsets]
\label{thm:nofreelunch-instance}
Let $\mathcal D=\{d_i\}_{i=1}^N\subset\R^p$ be a finite dictionary and
let $S\subset[N]$ be such that
\[
\conv\{d_i : i\in S\}\ \subsetneq\ \conv\{d_i : i\in[N]\}.
\]
Then there exist a dimension $n\in\mathbb{N}$, a nonempty bounded polyhedron
$X\subset\R^n$ and data $(A,b,c_0,M,r)$ for the robust LP template of
Assumption~\ref{asp:lp-template} (with this $X$ and $\mathcal D$) such that,
for the corresponding full and restricted robust values $z_{[N]}^\star$ and
$z_S^\star$, the following holds: for every $K>0$ one can choose
$(A,b,c_0,M,r)$ with
\[
z_{[N]}^\star\ -\ z_S^\star\ \ge\ K.
\]
In particular, the family of restricted models based on $U_S(r)$ cannot provide
a uniform approximation of the full directional model across all robust LP
instances built on the same dictionary~$\mathcal D$.
\end{theorem}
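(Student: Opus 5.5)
The plan is to force the value gap to equal the pointwise deficit of Proposition~\ref{prop:gap-ptwise} at a single, well-chosen direction, and then scale it up with the radius $r$. The key object is a direction $s^\dagger\in\R^p$ with
\[
\gamma\ :=\ \Delta_S(s^\dagger)\ =\ \max_{i\in[N]_0}\ip{d_i}{s^\dagger}-\max_{i\in S_0}\ip{d_i}{s^\dagger}\ >\ 0 .
\]

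\textbf{Step 1: a separating direction.} From the hypothesis, pick an atom $d_k$, $k\in[N]\setminus S$, that does not lie in the polytope $P_S:=\conv\{d_i: i\in S_0\}$. The separating hyperplane theorem for the compact convex set $P_S$ and the point $d_k$ then gives $s^\dagger$ with $\ip{d_k}{s^\dagger}>\max_{i\in S_0}\ip{d_i}{s^\dagger}$. Since $\ip{d_k}{s^\dagger}\le\max_{i\in[N]_0}\ip{d_i}{s^\dagger}$, this yields $\gamma>0$.

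\textbf{Main obstacle: the null atom.} This step needs $d_k\notin P_S$, and $P_S$ contains the null atom $d_0=0$. The hypothesis as written only compares $\conv\{d_i:i\in S\}$ with $\conv\{d_i:i\in[N]\}$, which is weaker. For example, take $d_1=2e$, $d_2=e$ and $S=\{1\}$. Then $\conv\{d_1\}\subsetneq\conv\{d_1,d_2\}$, yet the support functions over $\{0,d_1\}$ and $\{0,d_1,d_2\}$ coincide. Consequently $\Delta_S\equiv 0$, and no instance can produce a gap.

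So the proof has to read the hypothesis with $0$ adjoined, i.e. assume $\conv\{d_i:i\in S_0\}\subsetneq\conv\{d_i:i\in[N]_0\}$. Under that reading, strict inclusion forces some $d_k$ with $k\in[N]$ to lie outside $P_S$. I will state this reading explicitly, noting that it is exactly the condition under which $h_{U_S}\neq h_{U_{[N]}}$.

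\textbf{Step 2: a one-point instance.} Take $n=p$, $M=I_p$ and $c_0=0$, and let $X=\{s^\dagger\}$. This is a nonempty bounded polyhedron, written as $Ax\le b$ with $A=\begin{pmatrix}I_p\\-I_p\end{pmatrix}$ and $b=\begin{pmatrix}s^\dagger\\-s^\dagger\end{pmatrix}$. Then $q(x)=M^\top x=x$, and minimization over $X$ is trivial. Therefore
\[
z^\star_{[N]}-z^\star_S\ =\ \Phi_{[N]}(s^\dagger)-\Phi_S(s^\dagger)\ =\ r\,\Delta_S(s^\dagger)\ =\ r\gamma ,
\]
using the identity of Proposition~\ref{prop:gap-ptwise}.

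\textbf{Step 3: scaling.} Given $K>0$, set $r:=K/\gamma$, so that the gap equals $K$. Since $(A,b,c_0,M)$ are fixed and only $r$ varies, the restricted models admit no uniform approximation guarantee across this family of instances.

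For a less degenerate $X$, one can instead take a small box around $s^\dagger$ and use continuity of $\Delta_S$ (it is $2$-Lipschitz in $d_{\mathcal D}$, as shown in Appendix~\ref{app:average_worst}). This only costs a constant factor in $\gamma$ and is not needed for the statement.
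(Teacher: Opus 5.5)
Your construction is the paper's construction in different coordinates. The paper takes $n=1$, $X=\{1\}$, $c_0=0$ and $M=q^\top$ for a separating direction $q$, so the value gap is $r\bigl(\max_{i\in[N]_0}\langle d_i,q\rangle-\max_{j\in S_0}\langle d_j,q\rangle\bigr)$, and then lets $r$ grow linearly in $K$. Your choice $n=p$, $M=I_p$, $X=\{s^\dagger\}$ does exactly the same thing.

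The real difference is the hypothesis, and on that point you are right and the paper is not. The paper separates an extreme point $d_{i_0}$ from $\conv\{d_j:j\in S\}$, without $0$. It then asserts, ``by the inclusion of $0$ in both index sets,'' that
$$\max_{i\in[N]_0}\langle d_i,q\rangle-\max_{j\in S_0}\langle d_j,q\rangle\ \ge\ \max_{i\in[N]}\langle d_i,q\rangle-\max_{j\in S}\langle d_j,q\rangle.$$
This inequality goes the wrong way. The map $t\mapsto\max\{t,0\}$ is nondecreasing and $1$-Lipschitz, so the left side is at most the right side. The inequality is strict whenever $\max_{j\in S}\langle d_j,q\rangle<0$.

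Your example $d_1=2e$, $d_2=e$, $S=\{1\}$ is exactly such a case. The paper's recipe produces any $q$ with $\langle e,q\rangle<0$, and its quantity $\Delta=-\langle e,q\rangle$ is positive. Yet $\Delta_S\equiv 0$, so every robust LP instance on this dictionary has zero gap. Hence the theorem as literally stated is false, and the paper's choice $r:=K/\Delta$ certifies nothing.

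Your reformulation $\conv\{d_i:i\in S_0\}\subsetneq\conv\{d_i:i\in[N]_0\}$ is equivalent to $U_S(r)\subsetneq U_{[N]}(r)$. It is sufficient, by your Steps 1--3. It is also necessary: if the two hulls with $0$ coincide, then $\Delta_S\equiv 0$ and no instance has a gap. So it is the correct form of the result. It strictly strengthens the paper's hypothesis. Under it your argument is complete, including the scaling $r:=K/\gamma$ with the correct $\gamma=\Delta_S(s^\dagger)$.
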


\begin{proof}
Since $\conv\{d_i : i\in S\}\subsetneq\conv\{d_i : i\in[N]\}$, the polytope
$\conv\{d_i : i\in[N]\}$ has at least one extreme point that does not belong to
$\conv\{d_i : i\in S\}$. Let $d_{i_0}$ be such an extreme point. By the
separating hyperplane theorem, there exists $q\in\R^p$ such that
\[
\langle d_{i_0},q\rangle\ >\ \max_{y\in \conv\{d_i : i\in S\}}\langle y,q\rangle
\ =\ \max_{j\in S}\langle d_j,q\rangle.
\]
In particular,
\[
\Delta\ :=\ \max_{i\in[N]}\langle d_i,q\rangle\ -\ \max_{j\in S}\langle d_j,q\rangle\ >\ 0.
\]

We construct an instance in dimension $n=1$. Let
\[
X\ :=\ \{x\in\R : x=1\},
\]
which is a nonempty bounded polyhedron (it can be written as
$\{x: x\le 1,\ -x\le -1\}$).
Take $c_0:=0\in\R$ and define $M\in\R^{1\times p}$ by $M:=\begin{bmatrix}q^\top\end{bmatrix}$.
For any $u\in\R^p$ and $x\in X$,
\[
(c_0+Mu)^\top x\ =\ (Mu)\,x\ =\ q^\top u.
\]

For radius $r>0$, the robust values of the full and restricted models are
\[
z_{[N]}^\star
=\sup_{u\in U_{[N]}(r)}q^\top u
= r\,\max_{i\in[N]_0}\langle d_i,q\rangle,
\qquad
z_S^\star
=\sup_{u\in U_S(r)}q^\top u
= r\,\max_{j\in S_0}\langle d_j,q\rangle,
\]
where $S_0:=S\cup\{0\}$ and $[N]_0:=\{0,1,\dots,N\}$.
By definition of $\Delta$ and the inclusion of $0$ in both index sets, we have
\[
\max_{i\in[N]_0}\langle d_i,q\rangle
-\max_{j\in S_0}\langle d_j,q\rangle
\ \ge\
\max_{i\in[N]}\langle d_i,q\rangle
-\max_{j\in S}\langle d_j,q\rangle
=\ \Delta\ > 0.
\]
Therefore
\[
z_{[N]}^\star - z_S^\star
= r\Big(\max_{i\in[N]_0}\langle d_i,q\rangle
-\max_{j\in S_0}\langle d_j,q\rangle\Big)
\ \ge\ r\,\Delta.
\]
Given any prescribed $K>0$, choosing $r:=K/\Delta$ yields
$z_{[N]}^\star - z_S^\star \ge K$, as claimed.
\end{proof}

\section{Greedy with inexact oracle/selection and value certificate}
\label{sec:greedy-inexact}

In practice, the marginal gains of $F(\cdot;\mathcal T)$ (see \eqref{eq:F}) are computed with error
and the selection rule can be suboptimal. We model this per iteration.

\begin{assumption}[Per-iteration inexact selection]\label{asp:inexact}
At iteration $t$, let $\Delta F_t^{\max}$ be the best possible marginal gain of $F(\cdot;\mathcal T)$
among all indices $i\in [N]\setminus S_{t-1}$, that is,
\[
\Delta F_t^{\max}\ :=\ \max_{i\in [N]\setminus S_{t-1}}\ \big[F(S_{t-1}\cup\{i\};\mathcal T)-F(S_{t-1};\mathcal T)\big].
\]
We assume the chosen index $i_t$ satisfies
\[
\Delta F_t(\text{chosen})\ \ge\ (1-\rho)\,\Delta F_t^{\max}\ -\ \varepsilon_t,
\]
with $\rho\in[0,1)$ (multiplicative suboptimality) and $\varepsilon_t\ge 0$ (additive error).
\end{assumption}

\begin{theorem}[Inexact greedy guarantee for $F$]\label{thm:greedy-inexact}
Under submodularity and monotonicity of $F(\cdot;\mathcal T)$ and Assumption~\ref{asp:inexact}, after $B$ steps,
\[
F(S_{\rm aprx};\mathcal T)\ \ge\ 
\Big(1-\Big(1-\tfrac{1-\rho}{B}\Big)^{\!B}\Big)\,F(S^\star;\mathcal T)\ -\ 
\sum_{t=1}^B \Big(1-\tfrac{1-\rho}{B}\Big)^{\!B-t}\,\varepsilon_t,
\]
where $S^\star$ maximizes $F(\cdot;\mathcal T)$ with $|S^\star|\le B$. In particular,
\[
\Big(1-\tfrac{1-\rho}{B}\Big)^{\!B}\ \le\ e^{-(1-\rho)}
\quad\Rightarrow\quad
F(S_{\rm aprx};\mathcal T)\ \ge\ \big(1-e^{-(1-\rho)}\big)\,F(S^\star;\mathcal T)\ -\ 
\sum_{t=1}^B e^{-\frac{1-\rho}{B}(B-t)}\,\varepsilon_t.
\]
\end{theorem}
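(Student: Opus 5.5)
The plan is the standard Nemhauser--Wolsey--Fisher argument, run with the inexact per-step gains of Assumption~\ref{asp:inexact}. Write $F_t:=F(S_t;\mathcal T)$ with $S_0=\emptyset$. Since $d_0=\bm 0$, we have $F_0=0$, as noted in the proof of Theorem~\ref{thm:greedy}. Let $\mathrm{OPT}:=F(S^\star;\mathcal T)$.

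\textbf{Step 1: one-step inequality.} Fix an iteration $t$. By monotonicity (Proposition~\ref{prop:submod}), $\mathrm{OPT}\le F(S_{t-1}\cup S^\star;\mathcal T)$. Adding the elements of $S^\star\setminus S_{t-1}$ one at a time and applying submodularity to each marginal gives
\[
\mathrm{OPT}-F_{t-1}\le \sum_{j\in S^\star\setminus S_{t-1}}\big[F(S_{t-1}\cup\{j\};\mathcal T)-F_{t-1}\big]\le B\,\Delta F_t^{\max}.
\]
This uses $|S^\star|\le B$. If $S^\star\setminus S_{t-1}$ is empty, the bound still holds because $\Delta F_t^{\max}\ge 0$. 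Combining with Assumption~\ref{asp:inexact},
\[
F_t-F_{t-1}\ \ge\ (1-\rho)\Delta F_t^{\max}-\varepsilon_t\ \ge\ \tfrac{1-\rho}{B}\big(\mathrm{OPT}-F_{t-1}\big)-\varepsilon_t.
\]

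\textbf{Step 2: unrolling.} Set $g_t:=\mathrm{OPT}-F_t$ and $\gamma:=1-\tfrac{1-\rho}{B}$. Step 1 rearranges to $g_t\le \gamma\,g_{t-1}+\varepsilon_t$. Since $\gamma\in[0,1)$, the recursion is order preserving, so induction from $g_0=\mathrm{OPT}$ gives
\[
g_B\ \le\ \gamma^B\,\mathrm{OPT}+\sum_{t=1}^B \gamma^{B-t}\varepsilon_t.
\]
Rearranging this is exactly the first display of the theorem.

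\textbf{Step 3: exponential form.} Using $1-x\le e^{-x}$ with $x=\tfrac{1-\rho}{B}\in(0,1]$, we get $0\le\gamma\le e^{-(1-\rho)/B}$. Hence $\gamma^B\le e^{-(1-\rho)}$ and $\gamma^{B-t}\le e^{-\frac{1-\rho}{B}(B-t)}$. Substituting these into the first display, together with $\mathrm{OPT}\ge 0$ and $\varepsilon_t\ge0$, gives the second display.

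The nonnegativity $\mathrm{OPT}\ge F_0=0$ comes from monotonicity together with $d_0=\bm 0$. It is needed because the coefficient $(1-\gamma^B)$ multiplying $\mathrm{OPT}$ is being lowered to $(1-e^{-(1-\rho)})$, and that direction of inequality is valid only when $\mathrm{OPT}\ge 0$.

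The main obstacle is the bookkeeping in Step 1 under inexactness. The covering argument must be applied to the \emph{exact} best gain $\Delta F_t^{\max}$, and only afterwards transferred to the chosen index via Assumption~\ref{asp:inexact}. The additive errors $\varepsilon_t$ must also be carried through the recursion with the correct discount powers $\gamma^{B-t}$. Beyond that, no further difficulty is expected.
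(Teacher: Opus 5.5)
Your proposal is correct and follows essentially the same route as the paper's proof. Both use the covering bound $F(S^\star;\mathcal T)-F(S_{t-1};\mathcal T)\le B\,\Delta F_t^{\max}$, transfer it to the chosen index via Assumption~\ref{asp:inexact}, unroll the recursion $G_t\le a\,G_{t-1}+\varepsilon_t$ with $a=1-\tfrac{1-\rho}{B}$ from $G_0=F(S^\star;\mathcal T)$ (since $F(\varnothing;\mathcal T)=0$), and apply $1-x\le e^{-x}$; your remarks that $a\ge 0$ justifies the unrolling and that $F(S^\star;\mathcal T)\ge 0$ is needed to weaken the coefficient are points the paper leaves implicit.
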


\begin{proof}
Let $S_t$ be the subset selected after $t$ iterations, with $S_0=\varnothing$, and let
\[
G_t := F(S^\star;\mathcal T)-F(S_t;\mathcal T).
\]
We write $F(\cdot)$ for $F(\cdot;\mathcal T)$ to simplify notation.

By monotonicity and submodularity, the standard greedy lower bound gives
\[
\Delta F_t^{\max}\ \ge\ \frac{1}{B}\big(F(S^\star)-F(S_{t-1})\big)
\;=\;\frac{1}{B}G_{t-1}.
\]
(Indeed, adding the elements of $S^\star\setminus S_{t-1}$ one by one and using submodularity yields
$F(S^\star)-F(S_{t-1})\le B\,\Delta F_t^{\max}$ since $|S^\star|\le B$.)

By Assumption~\ref{asp:inexact},
\[
F(S_t)-F(S_{t-1})
\;=\;\Delta F_t(\text{chosen})
\;\ge\; (1-\rho)\,\Delta F_t^{\max}-\varepsilon_t
\;\ge\; \frac{1-\rho}{B}G_{t-1}-\varepsilon_t.
\]
Rearranging,
\[
G_t
=F(S^\star)-F(S_t)
\le \Big(1-\frac{1-\rho}{B}\Big)G_{t-1}+\varepsilon_t.
\]

Set
\[
a:=1-\frac{1-\rho}{B}\in[0,1).
\]
Then the recurrence is
\[
G_t\le a\,G_{t-1}+\varepsilon_t.
\]
Unrolling for $t=1,\dots,B$ gives
\[
G_B\le a^B G_0+\sum_{t=1}^B a^{B-t}\varepsilon_t.
\]
Since $S_0=\varnothing$ and $F(\varnothing;\mathcal T)=0$ (because $d_0=0$ is included), we have
\[
G_0=F(S^\star;\mathcal T)-F(\varnothing;\mathcal T)=F(S^\star;\mathcal T).
\]
Therefore,
\[
F(S_B;\mathcal T)
=F(S^\star;\mathcal T)-G_B
\ge (1-a^B)F(S^\star;\mathcal T)-\sum_{t=1}^B a^{B-t}\varepsilon_t.
\]
Since $S_{\rm aprx}=S_B$, this proves the first claim.

For the exponential form, use $(1-x)\le e^{-x}$ with $x=(1-\rho)/B$:
\[
a^B=\Big(1-\frac{1-\rho}{B}\Big)^B\le e^{-(1-\rho)},
\qquad
a^{B-t}\le e^{-\frac{1-\rho}{B}(B-t)}.
\]
Substituting these bounds yields the second inequality.
\end{proof}

\begin{remark}[Mapping oracle errors to $(\rho,\varepsilon_t)$]\label{rmk:oracle-to-rhoeps}
If at iteration $t$ the marginal gains are evaluated with a \emph{uniform} additive error $\delta_t$
\emph{both} for $F(S_{t-1}\cup\{i\})$ \emph{and} for $F(S_{t-1})$, then $\rho=0$ and $\varepsilon_t\le 2\delta_t$
(the gain is a difference of two quantities, each with error $\le \delta_t$).
If moreover only a multiplicative factor $(1-\tilde\rho)$ with respect to the best gain is guaranteed,
one may take $\rho=\tilde\rho$.
\end{remark}

\paragraph{Robust value certificate with inexact selection.}
Combining the pointwise value-gap identity \eqref{eq:gap-ptwise}--\eqref{eq:gap-optvals} with a finite
family $\mathcal T$ (Section~\ref{sec:enet-bridge}), we obtain the \textbf{deterministic certificate}
\begin{equation}\label{eq:certificado-inexacto}
\min_{x} \Phi_{[N]}(x)\ -\ \min_{x} \Phi_{S_{\rm aprx}}(x)
\ \le\ r\Big(\max_{s\in\mathcal T}\Delta_{S_{\rm aprx}}(s)\ +\ 2\varepsilon\Big),
\end{equation}
where $\varepsilon$ is the grid granularity over $q(X)$ (Theorem~\ref{thm:enet-bridge}).
The right-hand side is computable at run time.

\paragraph{Certified stopping rule.}
Fix a threshold $\tau>0$; stop when
\[
\max_{s\in\mathcal T}\Delta_{S}(s)\ \le\ \frac{\tau}{2r}
\qquad\text{and}\qquad
\varepsilon\ \le\ \frac{\tau}{4r},
\]
and report the certificate \(\min\Phi_{[N]}-\min\Phi_{S}\le \tau\).

%% file: content/0_appendix_exp.tex
\section{Experimental details}
\label{app:exp-details}

This appendix provides implementation details for the experiments in Sec.~\ref{sec:computational_exp}, covering out-of-sample calibration, frozen feature classification, and scaling and runtime metrics.

\paragraph{Hardware and implementation details.}
\label{app:runtime-details}
All methods were implemented in \texttt{Python 3.10.12} and experiments ran on Ubuntu Linux with an Intel\textregistered{} Xeon\textregistered{} Gold 6230R processor (104 logical cores, 2.10\,GHz) and 187\,GiB of memory. Wall clock runtime is reported.

\input{content/0_appendix_out}

\subsection{Synthetic geometric validation}
\label{app:teacher-details}

A synthetic two dimensional dictionary with $N=2000$ atoms and budgets $B\in\{2,4,8,16,32\}$ is utilized. Support approximation is evaluated along $x_{\mathrm{fixed}}=(1,0)$. A calibration set of size $m_{\mathrm{cal}}=1000$ is employed with $(\alpha,\delta)=(0.05,10^{-5})$.

The dictionary is generated once and reused. It mixes structured directions, isotropic noise, and decoy directions orthogonal to $x_{\mathrm{fixed}}$.

For any subset $S$, the following is reported:
\[
z_S(r)=\sup_{u\in U_S(r)}\langle u,x_{\mathrm{fixed}}\rangle
=r\max\!\Bigl(0,\max_{i\in S}\langle d_i,x_{\mathrm{fixed}}\rangle\Bigr),
\]
and compared to $z_{[N]}(r)$. An ellipsoidal baseline fitted to the calibration sample is included.

\subsubsection{Surrogate versus certificate alignment in the synthetic validation}
\label{app:surrogate_vs_certificate}

This subsection quantifies the relationship between the surrogate selection objective and the reported worst deficit certificate proxy in the synthetic geometric validation. The selection step uses the average coverage objective
\[
F(S;T_{\mathrm{eval}}),
\]
while the reporting includes the worst deficit proxy
\[
G(S;T_{\mathrm{eval}})
:= r_{\mathrm{eval}} \max_{s \in T_{\mathrm{eval}}} \Delta_S(s).
\]
A direct empirical comparison between these quantities clarifies when improvements in the surrogate objective align with improvements in the certificate proxy.

For each method and budget, the following normalized metrics are evaluated:
\[
F_{\mathrm{ratio}}(S)
:= \frac{F(S;T_{\mathrm{eval}})}{F([N];T_{\mathrm{eval}})},
\qquad
G_{\mathrm{ratio}}(S)
:= \frac{G(S;T_{\mathrm{eval}})}{G(\varnothing;T_{\mathrm{eval}})}.
\]
The correlation is computed between $F_{\mathrm{ratio}}$ and $-G_{\mathrm{ratio}}$, so larger values on both axes correspond to better performance.

The statistics in this subsection use all available budgets exported by the synthetic experiment. Table~\ref{tab:surrogate_certificate_summary} reports the aggregate statistics. The Spearman correlation indicates a monotone alignment between the surrogate objective and the certificate proxy in this setup. The Greedy Coverage and Greedy MaxGap comparison shows divergence only at budgets $B \in \{1,3,4\}$. This localizes the mismatch to the micro budget regime.

\begin{table}[!h]
\centering
\caption{Summary of surrogate versus certificate alignment in the synthetic geometric validation.}
\label{tab:surrogate_certificate_summary}
\small
\begin{tabular}{l c}
\hline
Quantity & Value \\
\hline
Number of analyzed points & 124 \\
Pearson corr$(F_{\mathrm{ratio}}, -G_{\mathrm{ratio}})$ & 0.8554 \\
Spearman corr$(F_{\mathrm{ratio}}, -G_{\mathrm{ratio}})$ & 0.9617 \\
Budgets with GC versus GM divergence & 3 \\
Divergence budgets & $\{1,3,4\}$ \\
\hline
\end{tabular}
\end{table}

Figure~\ref{fig:surrogate_certificate_relation} plots the normalized surrogate objective against the normalized certificate proxy across methods and budgets, together with a direct Greedy Coverage versus Greedy MaxGap difference diagnostic. The first panel shows a monotone trend. The second panel reports two normalized differences for each budget:
\[
\Delta F := F_{\mathrm{ratio}}^{\mathrm{GC}} - F_{\mathrm{ratio}}^{\mathrm{GM}},
\qquad
\Delta G_{\mathrm{score}} := G_{\mathrm{ratio}}^{\mathrm{GM}} - G_{\mathrm{ratio}}^{\mathrm{GC}},
\]
where positive $\Delta G_{\mathrm{score}}$ means Greedy Coverage achieves a smaller certificate proxy. A sign disagreement between $\Delta F$ and $\Delta G_{\mathrm{score}}$ indicates a surrogate versus certificate divergence.

\begin{figure}[!h]
    \centering
    \includegraphics[width=0.8\linewidth]{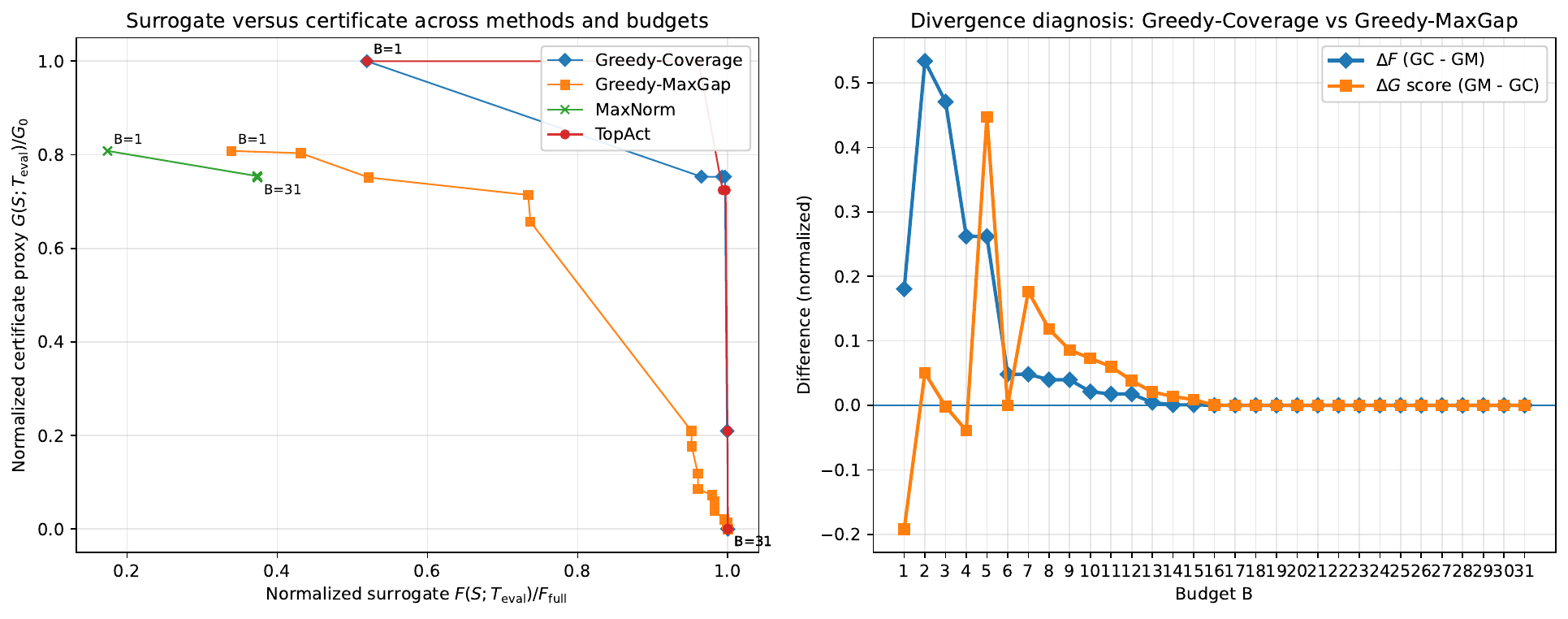}
    \caption{Surrogate versus certificate analysis in the synthetic geometric validation. Left: normalized surrogate objective $F_{\mathrm{ratio}}$ versus normalized certificate proxy $G_{\mathrm{ratio}}$ across methods and budgets. Lower values of $G_{\mathrm{ratio}}$ are better. Right: direct comparison between Greedy Coverage and Greedy MaxGap through $\Delta F$ and $\Delta G_{\mathrm{score}}$ as functions of the budget.}
    \label{fig:surrogate_certificate_relation}
\end{figure}

Table~\ref{tab:surrogate_certificate_gcgm} reports representative budget values for the Greedy Coverage versus Greedy MaxGap comparison. Divergence is confined to the smallest budgets, while the two criteria align after that regime.

\begin{table}[!h]
\centering
\caption{Representative Greedy Coverage versus Greedy MaxGap comparisons. Positive $\Delta F$ favors Greedy Coverage on the surrogate objective. Positive $\Delta G_{\mathrm{score}}$ favors Greedy Coverage on the certificate proxy.}
\label{tab:surrogate_certificate_gcgm}
\scriptsize
\begin{tabular}{c c c c c c c}
\hline
$B$ & $\Delta F$ & $\Delta G_{\mathrm{score}}$ & GC $F_{\mathrm{ratio}}$ & GC $G_{\mathrm{ratio}}$ & GM $F_{\mathrm{ratio}}$ & GM $G_{\mathrm{ratio}}$ \\
\hline
$1$ & $0.1803$ & $-0.1918$ & $0.5193$ & $1.0000$ & $0.3390$ & $0.8082$ \\
$3$ & $0.4705$ & $-0.0017$ & $0.9924$ & $0.7531$ & $0.5219$ & $0.7513$ \\
$4$ & $0.2620$ & $-0.0390$ & $0.9962$ & $0.7531$ & $0.7342$ & $0.7140$ \\
$5$ & $0.2618$ & $0.4472$ & $0.9989$ & $0.2097$ & $0.7371$ & $0.6568$ \\
$9$ & $0.0396$ & $0.0857$ & $1.0000$ & $0.0000$ & $0.9604$ & $0.0857$ \\
$19$ & $0.0000$ & $0.0000$ & $1.0000$ & $0.0000$ & $1.0000$ & $0.0000$ \\
$29$ & $0.0000$ & $0.0000$ & $1.0000$ & $0.0000$ & $1.0000$ & $0.0000$ \\
\hline
\end{tabular}
\end{table}

Table~\ref{tab:surrogate_certificate_method_paths} summarizes the start and end points of each method trajectory. The normalized trajectories support the same conclusion: the surrogate objective acts as a guide for the certificate proxy beyond the smallest budgets in this synthetic setup.

\begin{table}[!h]
\centering
\caption{Method trajectory endpoints in normalized surrogate and certificate coordinates.}
\label{tab:surrogate_certificate_method_paths}
\small
\begin{tabular}{l c c}
\hline
Method & $F_{\mathrm{ratio}}$ (start $\to$ end) & $G_{\mathrm{ratio}}$ (start $\to$ end) \\
\hline
Greedy Coverage & $0.5193 \to 1.0000$ & $1.0000 \to 0.0000$ \\
Greedy MaxGap & $0.3390 \to 1.0000$ & $0.8082 \to 0.0000$ \\
MaxNorm & $0.1736 \to 0.3737$ & $0.8082 \to 0.7514$ \\
TopAct & $0.5193 \to 1.0000$ & $1.0000 \to 0.0000$ \\
\hline
\end{tabular}
\end{table}

In the synthetic geometric validation, the surrogate objective $F$ and the certificate proxy $G$ align well overall, with divergence concentrated in the micro budget regime. This supports using the submodular surrogate for selection while preserving the worst deficit metric as the reporting and diagnostic quantity.

\subsection{Operationalizing the \texorpdfstring{$\varepsilon$}{epsilon} term in Theorem 3.1}
\label{app:epsilon_practicality}

This section reports an empirical evaluation of the discretization term in Theorem 3.1, quantifying the approximation gap induced by replacing the direction set $q(X)$ with a finite set $T$.

Let
$$ d_D(s,t) = \max_i \left| \langle d_i, s-t \rangle \right| $$
and let $T_{\mathrm{probe}}$ denote an independent probe pool of normalized directions. The empirical discretization error is defined as
$$ \widehat{\varepsilon}(T) = \max_{s \in T_{\mathrm{probe}}} \min_{t \in T} d_D(s,t). $$
This quantity is the empirical counterpart of the $\varepsilon$ net radius in Theorem 3.1.

In the synthetic two dimensional experiment, the same finite set $T$ is used for selection and reporting, with $|T| = 801$, and the independent probe set has $|T_{\mathrm{probe}}| = 4000$. The estimate is:
$$ \widehat{\varepsilon}(T) = 1.824282, \qquad 2\widehat{\varepsilon}(T) = 3.648563. $$

\begin{figure}[!h]
    \centering
    \includegraphics[width=0.45\linewidth]{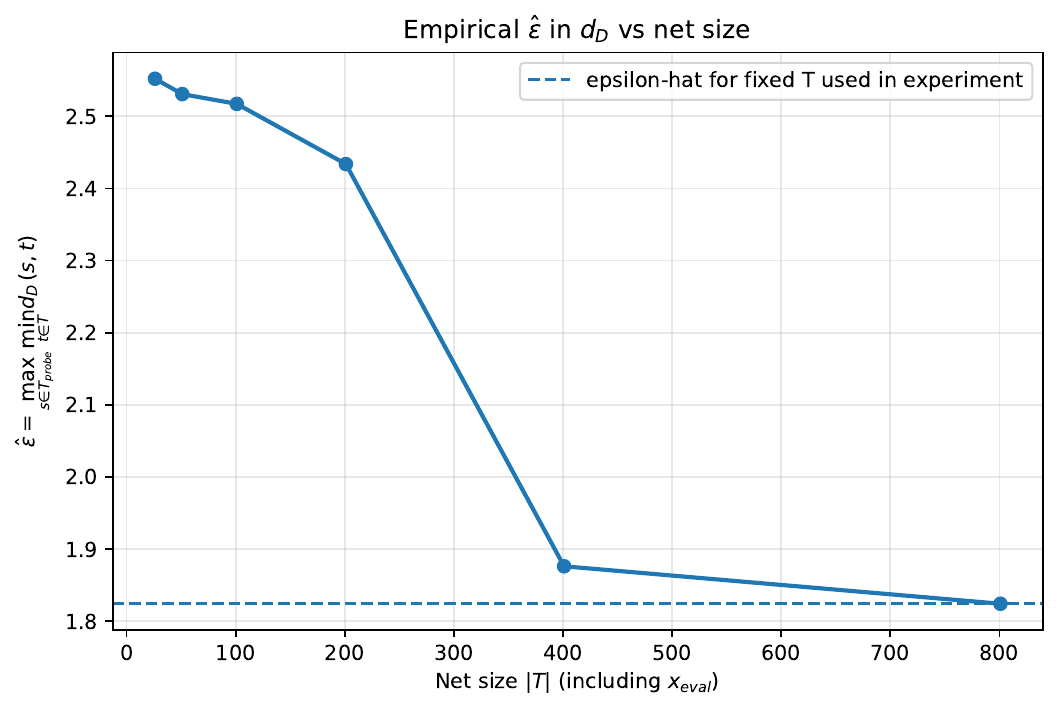}
    \caption{
    Empirical discretization error $\widehat{\varepsilon}(T)$ in the metric $d_D$ as a function of the net size $|T|$. The curve is computed from nested prefixes of the direction pool and an independent probe set $T_{\mathrm{probe}}$. The estimated curve decreases monotonically over the reported refinement path.
    }
    \label{fig:epsilon_practicality}
\end{figure}

Table~\ref{tab:epsilon_summary} summarizes the statistics. The mean and the 95th percentile of the nearest net distance are small relative to the supremum $\widehat{\varepsilon}(T)$, indicating the worst case discretization term is driven by a small subset of probe directions.

\begin{table}[!h]
\centering
\caption{Summary of the empirical $\widehat{\varepsilon}(T)$ estimate in the synthetic two dimensional experiment.}
\label{tab:epsilon_summary}
\begin{tabular}{lc}
\toprule
Quantity & Value \\
\midrule
$|T|$ used in selection and reporting & 801 \\
$|T_{\mathrm{probe}}|$ & 4000 \\
Number of metric atoms used for $d_D$ & 8 \\
$\widehat{\varepsilon}(T)$ & 1.824282 \\
$2\widehat{\varepsilon}(T)$ & 3.648563 \\
Mean nearest net distance on $T_{\mathrm{probe}}$ & 0.013876 \\
95th percentile nearest net distance on $T_{\mathrm{probe}}$ & 0.027180 \\
\bottomrule
\end{tabular}
\end{table}

Table~\ref{tab:epsilon_curve_values} reports the refinement path. The empirical estimate decreases from 2.551961 to 1.824282 as $|T|$ increases from 26 to 801, representing a reduction of 28.51\%.

\begin{table}[!h]
\centering
\caption{Empirical $\widehat{\varepsilon}(T)$ along the net refinement path.}
\label{tab:epsilon_curve_values}
\begin{tabular}{cc}
\toprule
$|T|$ & $\widehat{\varepsilon}(T)$ \\
\midrule
26 & 2.551961 \\
51 & 2.530695 \\
101 & 2.517097 \\
201 & 2.433706 \\
401 & 1.876200 \\
801 & 1.824282 \\
\bottomrule
\end{tabular}
\end{table}

Theorem 3.1 bounds the robust value gap through the term
$$ \max_{t \in T} \Delta_S(t) + 2\varepsilon. $$
Using the empirical estimate $\widehat{\varepsilon}(T)$, this section reports the practical decomposition
$$ \max_{t \in T} \Delta_S(t) + 2\widehat{\varepsilon}(T). $$
Figure~\ref{fig:theorem31_decomposition} shows this decomposition for Greedy Coverage and Greedy MaxGap.

\begin{figure}[!h]
    \centering
    \includegraphics[width=0.5\linewidth]{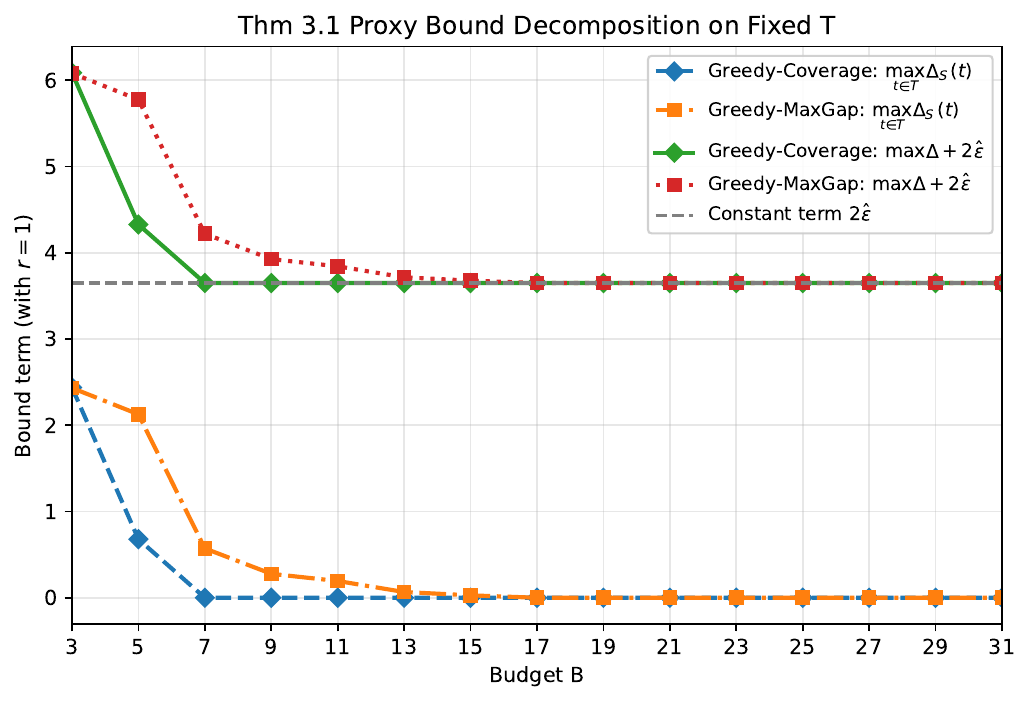}
    \caption{
    Empirical decomposition of the Theorem 3.1 bound term on the finite set $T$. The curves show $\max_{t \in T}\Delta_S(t)$ and $\max_{t \in T}\Delta_S(t) + 2\widehat{\varepsilon}(T)$ for Greedy Coverage and Greedy MaxGap. The horizontal line corresponds to the constant discretization contribution $2\widehat{\varepsilon}(T)$.
    }
    \label{fig:theorem31_decomposition}
\end{figure}

The decomposition reveals two regimes. For small budgets, the reduction in the bound results from the decrease in $\max_{t \in T}\Delta_S(t)$. For moderate budgets, the alignment deficit becomes small, and the constant term $2\widehat{\varepsilon}(T)$ dominates. In this range, increasing the atom budget $B$ has minimal effect on the certificate, and the primary method to tighten the bound is the refinement of the direction set $T$.

For reference, Greedy Coverage reaches $\max_{t \in T}\Delta_S(t)=0$ at $B=9$ in this experiment, while Greedy MaxGap reaches $\max_{t \in T}\Delta_S(t)=0$ at $B=17$.

\begin{table}[!h]
\centering
\caption{Representative values of the practical Theorem 3.1 decomposition for Greedy Coverage and Greedy MaxGap.}
\label{tab:gc_mg_certificate}
\begin{tabular}{cccccc}
\toprule
$B$ &
\multicolumn{2}{c}{Greedy Coverage} &
\multicolumn{2}{c}{Greedy MaxGap} &
$2\widehat{\varepsilon}(T)$ \\
\cmidrule(r){2-3}\cmidrule(r){4-5}
& $\max_{t\in T}\Delta_S(t)$ & $+\;2\widehat{\varepsilon}(T)$ &
$\max_{t\in T}\Delta_S(t)$ & $+\;2\widehat{\varepsilon}(T)$ &
\\
\midrule
3 & 2.438163 & 6.086726 & 2.432507 & 6.081070 & 3.648563 \\
5 & 0.678860 & 4.327424 & 2.126637 & 5.775201 & 3.648563 \\
9 & 0.000000 & 3.648563 & 0.277614 & 3.926178 & 3.648563 \\
19 & 0.000000 & 3.648563 & 0.000000 & 3.648563 & 3.648563 \\
29 & 0.000000 & 3.648563 & 0.000000 & 3.648563 & 3.648563 \\
\bottomrule
\end{tabular}
\end{table}

Although Greedy Coverage optimizes an average coverage objective, it matches or improves the worst case deficit $\max_{t \in T}\Delta_S(t)$ at several budgets shown in Table~\ref{tab:gc_mg_certificate}.

The empirical estimate $\widehat{\varepsilon}(T)$ makes the discretization term in Theorem 3.1 computable. The reported decomposition shows atom selection controls the alignment deficit at small budgets, while net refinement controls the certificate once $\max_{t \in T}\Delta_S(t)$ is small. This provides a diagnostic for deciding whether to increase $B$ or to enrich the direction set $T$.

\subsection{Frozen feature CIFAR classification}
\label{app:frozen-details}

A pretrained ResNet 18 backbone is used to extract frozen features. Tasks are binary, defined by two classes from CIFAR 10 or CIFAR 100, with labels mapped to $\{-1,+1\}$. The training features are split into disjoint parts with proportions
\[
[0.5,\,0.2,\,0.15,\,0.15],
\]
used for classifier training, dictionary mining, selection directions, and reporting directions.

A linear classifier is trained with Adam and fixed hyperparameters. A dictionary is mined in feature space using a projected gradient attack in $\ell_2$ with radius $\varepsilon$. Normalized perturbation directions are stored, and certified accuracy $\mathrm{CertAcc}(\varepsilon)$ alongside projected gradient robust accuracy are evaluated.

\paragraph{Certified accuracy.}
\label{app:certacc}
Let $w\in\mathbb{R}^d$ be the weight vector of the linear classifier, and let
$\mathcal D=\{d_i\}_{i=1}^N \subseteq \mathbb{R}^d$ be the directional dictionary in feature space.
Define
\[
\mathrm{pen}_{\mathcal D}(w)\ :=\ \max_{i\in[N]} \big|\langle d_i, w\rangle\big|.
\]
For $\varepsilon>0$, define
\[
\mathrm{CertAcc}_{\mathcal D}(\varepsilon)\ :=\ \frac{1}{n}\sum_{j=1}^n
\mathbf{1}\!\left\{ y_j\langle w, x_j\rangle > \varepsilon\,\mathrm{pen}_{\mathcal D}(w)\right\},
\]
where $(x_j,y_j)$ are test examples in feature space and $y_j\in\{-1,+1\}$.
For a selected subset $S\subseteq[N]$, the reported metric is $\mathrm{CertAcc}_{S}(\varepsilon)$, obtained by replacing
$\mathrm{pen}_{\mathcal D}$ with $\mathrm{pen}_{S}(w)=\max_{i\in S}|\langle d_i,w\rangle|$.





\subsection{Full Instance Context for Budget Efficiency and Reliability Analysis}
\label{app:all100_sec53}

Section~\ref{sec:synthetic_results} reports onset budget $B^\star$ and budget-wise success rates on the subset of instances that are robust feasible under the full dictionary baseline, with zero violations. This conditioning isolates budget efficiency from baseline infeasibility.

Across all generated seeds ($N=100$), the full dictionary baseline is robust feasible on 64 seeds (64.0\%) and infeasible on 36 seeds (36.0\%). Table~\ref{tab:all100_context_sec53} summarizes policy-level certification counts over all 100 seeds on the same budget grid used in Section~\ref{sec:synthetic_results}, making the screening step explicit.

\begin{table}[!h]
\centering\small
\setlength{\tabcolsep}{4pt}
\caption{Policy-level certification summary over all generated instances ($N=100$) for the experiment in Section~\ref{sec:synthetic_results}. Full feasible and Full infeasible refer to feasibility under the full dictionary baseline.}
\label{tab:all100_context_sec53}
\begin{tabular}{lcccc}
\toprule
Policy & Full feasible & Full infeasible & Total certified & Not certified \\
\midrule
greedy-maxgap   & 58/64 (90.6\%) & 3/36 (8.3\%) & 61/100 (61.0\%) & 39/100 (39.0\%) \\
greedy-coverage & 55/64 (85.9\%) & 3/36 (8.3\%) & 58/100 (58.0\%) & 42/100 (42.0\%) \\
max-gamma       & 53/64 (82.8\%) & 3/36 (8.3\%) & 56/100 (56.0\%) & 44/100 (44.0\%) \\
\bottomrule
\end{tabular}
\end{table}

The $N=64$ subset in Section~\ref{sec:synthetic_results} is the relevant population for onset budget and budget-wise efficiency comparisons because those metrics are meaningful only when the full dictionary baseline is robust feasible. Table~\ref{tab:all100_context_sec53} complements that analysis by reporting certification outcomes on the full instance pool.

Certification on full-baseline infeasible seeds does not imply feasibility of the full dictionary baseline. It means that a policy achieves zero violations for that seed at some evaluated budget in its policy-specific sweep, while full-baseline feasibility is defined with respect to the full dictionary baseline.

%% file: content/0_appendix_out.tex
\subsection{Out-of-sample calibration usage and conservatism}
\label{app:oos-usage}
\label{app:calibration_ablation_f1}

Radius calibration applies to the synthetic geometric validation experiment. For a selected subset $S$, a radius $\widehat r(S)$ is estimated from a calibration set, reporting the guarantee of Theorem~\ref{thm:oos-feas}. In the frozen feature classification experiment, radius is not calibrated; the reported metric is the certified accuracy induced by the atomic model and the dictionary.

\paragraph{Calibration conservatism and out-of-sample feasibility.}
This subsection reports a calibration ablation for the synthetic pipeline. The analysis compares DKW union calibration against split calibration on a held out test split, detailing sensitivity to the target violation level $\alpha$ and confidence parameter $\delta$.

Results are computed from \texttt{calibration\_ablation.csv}, storing one row per $(B,\alpha,\delta,\text{method})$ configuration. Configurations include budgets $B \in \{9,19,29\}$, target levels $\alpha \in \{0.01,0.03,0.05,0.1\}$, confidence levels $\delta \in \{10^{-5},10^{-4},10^{-3}\}$, and calibration methods \{DKW union, Split\}.

Table~\ref{tab:f1_calibration_method_delta} summarizes the ablation by calibration method and confidence parameter. DKW union calibration produces higher correction terms and calibrated radii, with negative mean violation gaps $(\widehat{\mathrm{viol}}-\alpha)$ across tested $\delta$. Split calibration yields lower radii and positive mean violation gaps.

For DKW union, the mean correction term changes from 0.2316 at $\delta=10^{-3}$ to 0.2367 at $\delta=10^{-5}$, and the mean violation gap remains negative (-0.0082 to -0.0086). For split calibration, the mean correction term changes from 0.0616 to 0.0781, while the mean violation gap remains positive (0.0048 to 0.0037).

\begin{table}[h]
\centering
\caption{Calibration ablation summary by method and confidence parameter. Gap is observed violation minus target level.}
\label{tab:f1_calibration_method_delta}
\scriptsize
\begin{tabular}{l c c c c c c c}
\hline
Method & $\delta$ & Mean corr. & Mean radius & Mean viol. & Mean gap & Max gap & Min gap \\
\hline
DKW union & $10^{-5}$ & 0.2367 & 4.4114 & 0.0389 & -0.0086 & 0.0000 & -0.0254 \\
DKW union & $10^{-4}$ & 0.2342 & 4.4089 & 0.0391 & -0.0084 & 0.0000 & -0.0250 \\
DKW union & $10^{-3}$ & 0.2316 & 4.4064 & 0.0393 & -0.0082 & 0.0000 & -0.0248 \\
Split & $10^{-5}$ & 0.0781 & 4.2529 & 0.0512 & 0.0037 & 0.0088 & 0.0008 \\
Split & $10^{-4}$ & 0.0704 & 4.2451 & 0.0519 & 0.0044 & 0.0100 & 0.0016 \\
Split & $10^{-3}$ & 0.0616 & 4.2364 & 0.0524 & 0.0048 & 0.0106 & 0.0026 \\
\hline
\end{tabular}
\end{table}

Table~\ref{tab:f1_calibration_budget_method} summarizes these quantities by budget after averaging over tested $(\alpha,\delta)$ pairs. DKW union radius increases with budget because the union correction grows with combinatorial family size, while split correction remains constant.

\begin{table}[h]
\centering
\caption{Calibration ablation summary by budget and calibration method, averaged over $(\alpha,\delta)$ settings.}
\label{tab:f1_calibration_budget_method}
\small
\begin{tabular}{c l c c c c}
\hline
$B$ & Method & Mean corr. & Mean radius & Mean viol. & Mean gap \\
\hline
9 & DKW union & 0.1809 & 4.3557 & 0.0429 & -0.0046 \\
9 & Split & 0.0700 & 4.2448 & 0.0518 & 0.0043 \\
19 & DKW union & 0.2397 & 4.4144 & 0.0385 & -0.0090 \\
19 & Split & 0.0700 & 4.2448 & 0.0518 & 0.0043 \\
29 & DKW union & 0.2818 & 4.4566 & 0.0360 & -0.0115 \\
29 & Split & 0.0700 & 4.2448 & 0.0518 & 0.0043 \\
\hline
\end{tabular}
\end{table}

Figure~\ref{fig:f1_calibration_conservatism} reports correction magnitude at the default confidence level, and Figure~\ref{fig:f1_calibration_alpha_delta} reports the ablation over $(\alpha,\delta)$ for both calibration methods. The left panel plots observed out-of-sample violation against the target level at default confidence. The right panel plots the violation gap as a function of the confidence parameter at fixed $\alpha=0.05$.

\begin{figure}[h]
    \centering
    \includegraphics[width=0.6\linewidth]{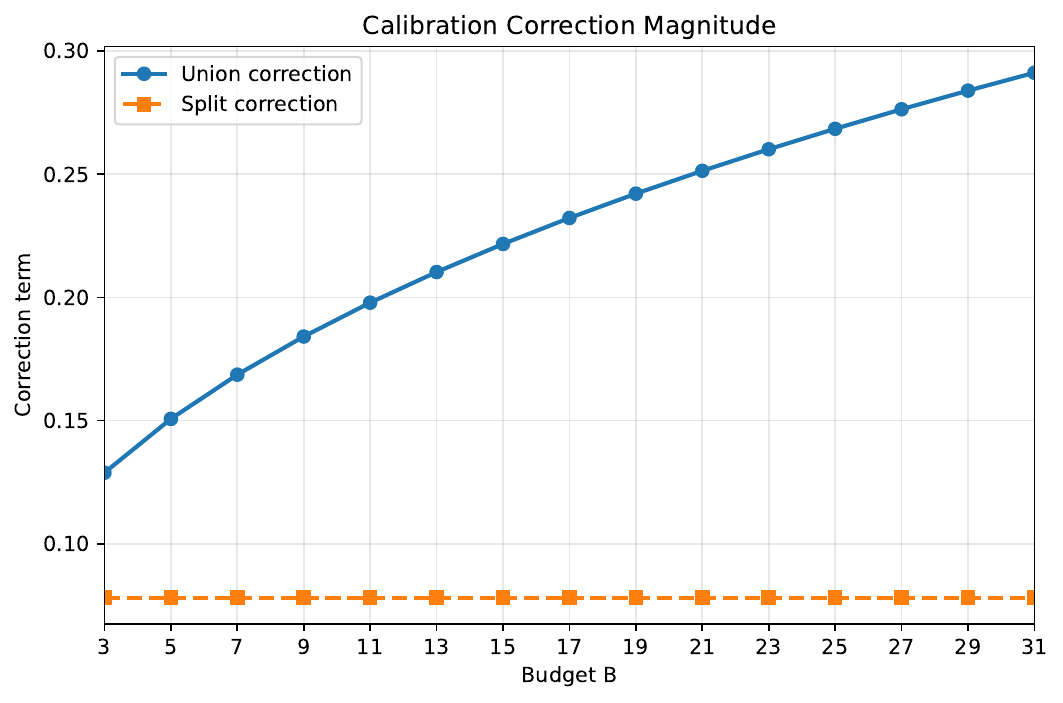}
    \caption{Calibration correction magnitude at the default confidence level. DKW union correction increases with budget; split correction is constant.}
    \label{fig:f1_calibration_conservatism}
\end{figure}

\begin{figure}[h]
    \centering
    \includegraphics[width=0.98\linewidth]{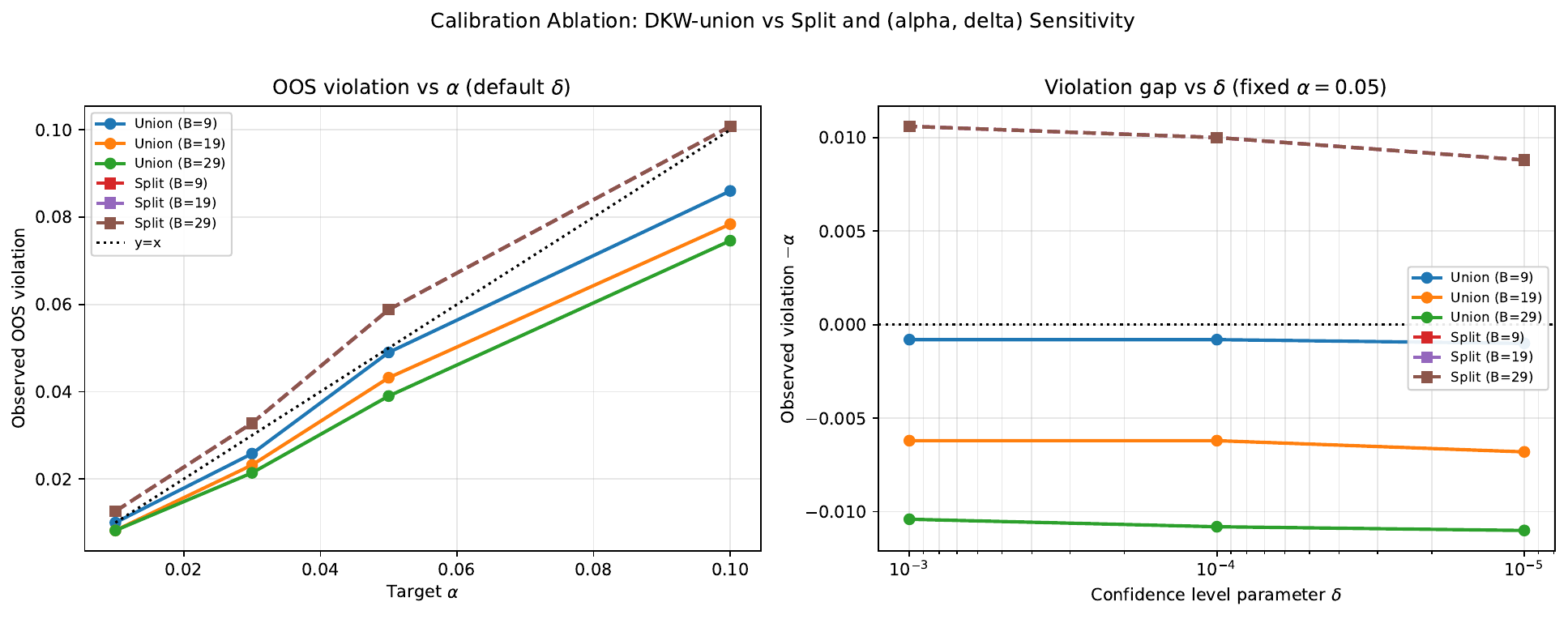}
    \caption{Calibration ablation with $(\alpha,\delta)$ sweeps. Left: observed out-of-sample violation versus target $\alpha$ at default confidence level. Right: violation gap $(\widehat{\mathrm{viol}}-\alpha)$ versus $\delta$ at fixed $\alpha=0.05$.}
    \label{fig:f1_calibration_alpha_delta}
\end{figure}

Table~\ref{tab:f1_calibration_representative} reports configurations at $\alpha=0.05$ for tested budgets and confidence levels. Metrics include raw calibration quantile, correction term, calibrated radius, observed violation, and violation gap.

\begin{table}[h]
\centering
\caption{Calibration settings at fixed $\alpha=0.05$ for tested budgets and confidence levels.}
\label{tab:f1_calibration_representative}
\scriptsize
\begin{tabular}{c c l c c c c c}
\hline
$B$ & $\delta$ & Method & Quantile & Corr. & Radius & Viol. & Gap \\
\hline
9 & $10^{-5}$ & DKW union & 3.9942 & 0.1841 & 4.1783 & 0.0490 & -0.0010 \\
9 & $10^{-5}$ & Split & 3.9942 & 0.0781 & 4.0723 & 0.0588 & 0.0088 \\
9 & $10^{-4}$ & DKW union & 3.9942 & 0.1810 & 4.1752 & 0.0492 & -0.0008 \\
9 & $10^{-4}$ & Split & 3.9942 & 0.0704 & 4.0646 & 0.0600 & 0.0100 \\
9 & $10^{-3}$ & DKW union & 3.9942 & 0.1778 & 4.1720 & 0.0492 & -0.0008 \\
9 & $10^{-3}$ & Split & 3.9942 & 0.0616 & 4.0559 & 0.0606 & 0.0106 \\
19 & $10^{-5}$ & DKW union & 3.9942 & 0.2421 & 4.2363 & 0.0432 & -0.0068 \\
19 & $10^{-5}$ & Split & 3.9942 & 0.0781 & 4.0723 & 0.0588 & 0.0088 \\
19 & $10^{-4}$ & DKW union & 3.9942 & 0.2397 & 4.2339 & 0.0438 & -0.0062 \\
19 & $10^{-4}$ & Split & 3.9942 & 0.0704 & 4.0646 & 0.0600 & 0.0100 \\
19 & $10^{-3}$ & DKW union & 3.9942 & 0.2373 & 4.2315 & 0.0438 & -0.0062 \\
19 & $10^{-3}$ & Split & 3.9942 & 0.0616 & 4.0559 & 0.0606 & 0.0106 \\
29 & $10^{-5}$ & DKW union & 3.9942 & 0.2839 & 4.2781 & 0.0390 & -0.0110 \\
29 & $10^{-5}$ & Split & 3.9942 & 0.0781 & 4.0723 & 0.0588 & 0.0088 \\
29 & $10^{-4}$ & DKW union & 3.9942 & 0.2819 & 4.2761 & 0.0392 & -0.0108 \\
29 & $10^{-4}$ & Split & 3.9942 & 0.0704 & 4.0646 & 0.0600 & 0.0100 \\
29 & $10^{-3}$ & DKW union & 3.9942 & 0.2798 & 4.2740 & 0.0396 & -0.0104 \\
29 & $10^{-3}$ & Split & 3.9942 & 0.0616 & 4.0559 & 0.0606 & 0.0106 \\
\hline
\end{tabular}
\end{table}

The pairwise correction ratio between DKW union and split calibration, computed at matched $(B,\alpha,\delta)$ settings, has mean 3.3715 and maximum 4.5388. The radius increase induced by DKW union averages 0.1641, while the change in violation gap $(\widehat{\mathrm{viol}}-\alpha)$ averages -0.0127.

DKW union calibration provides feasibility control with higher correction terms and calibrated radii, while split calibration yields lower radii but can exceed the target violation level on the held out test split.